\newcommand{\qedwhite}{\hfill \ensuremath{\Box}}
\newcommand{\lp}{\left(}
\newcommand{\rp}{\right)}
\newcommand{\lb}{\left[}
\newcommand{\rb}{\right]}
\newcommand{\lbp}{\left\{}
\newcommand{\rbp}{\right\}}
\newcommand{\lba}{\left\lvert}
\newcommand{\rba}{\right\rvert}
\newcommand{\lV}{\left\lVert}
\newcommand{\rV}{\right\rVert}
\newcommand{\mcal}{\mathcal}
\newcommand{\bbm}{\mathbbm}
\newcommand{\mbb}{\mathbb}
\newcommand{\msf}{\mathsf}
\newcommand{\la}{\leftarrow}
\newcommand{\lan}{\langle}
\newcommand{\ran}{\rangle}
\newcommand{\eqDef}{\coloneqq}
\newcommand{\diid}{\overset{\text{i.i.d.}}{\sim}}
\newcommand{\E}{\mathbb{E}}
\newcommand{\eps}{\varepsilon}
\newcommand{\norm}[1]{\left\lVert #1 \right\rVert}
\newcommand{\set}[1]{\left\{ #1 \right\}}
\newcommand{\floor}[1]{\left\lfloor #1 \right\rfloor}
\newcommand{\ex}{\operatorname*{\mathbb{E}}}
\newcommand{\exof}[2][]{\ex_{#1}\left[#2\right]}
\newcommand{\uni}{\operatorname{\mathsf{Unif}}}
\newcommand{\ber}{\operatorname{\mathsf{Bern}}}
\newcommand{\x}{x}
\newcommand{\xn}{\x^n}
\newcommand{\xmean}{\mu}
\newcommand{\xest}{\hat{\mu}}
\newcommand{\xestk}{\xest^{(k)}}
\newcommand{\y}{Y}
\newtheorem{theorem}{Theorem}[section]
\newtheorem{proposition}[theorem]{Proposition}
\newtheorem{lemma}[theorem]{Lemma}
\newtheorem{corollary}[theorem]{Corollary}
\newtheorem{definition}[theorem]{Definition}
\newtheorem{remark}[theorem]{Remark}
\title{\textbf{Privacy Amplification via Compression: Achieving the Optimal Privacy-Accuracy-Communication Trade-off in Distributed Mean Estimation}   } 
\author{Wei-Ning Chen$^{\dag}$ \and Dan Song$^{\dag}$ \and Ayfer \"Ozg\"ur$^{\dag}$ \and Peter Kairouz$^{\ddag}$} 
\date{%
Stanford University$^{ \dag}$, Google Research$^{ \ddag}$\\[2ex]%
} 
\begin{document}
\maketitle
\begin{abstract}
Privacy and communication constraints are two major bottlenecks in federated learning (FL) and analytics (FA). We study the optimal accuracy of mean and frequency estimation (canonical models for FL and FA respectively) under joint communication and $(\varepsilon, \delta)$-differential privacy (DP) constraints. We consider both the central and the multi-message shuffling DP models. We show that in order to achieve the optimal $\ell_2$ error under $(\varepsilon, \delta)$-DP, it is sufficient for each client to send $\Theta\lp n \min\lp\varepsilon, \varepsilon^2\rp\rp$ bits for FL 
and $\Theta\lp\log\lp n\min\lp\varepsilon, \varepsilon^2\rp \rp\rp$ bits for FA to the server, where $n$ is the number of participating clients.  Without compression, each client needs $O(d)$ bits and $O\lp\log d\rp$ bits for the mean and frequency estimation problems respectively (where $d$ corresponds to the number of trainable parameters in FL or the domain size in FA), meaning that we can get significant savings in the regime $ n \min\lp\varepsilon, \varepsilon^2\rp  = o(d)$, which is often the relevant regime in practice. 

We propose two different ways to leverage compression for privacy amplification and achieve the optimal privacy-communication-accuracy trade-off. In both cases, each client communicates only partial information about its sample and we show that privacy is amplified by randomly selecting the part contributed by each client. In the first method, the random selection is revealed to the server, which results in a central DP guarantee with optimal privacy-communication-accuracy trade-off.  In the second method, the random data parts at each client are privatized locally and anonymized by a secure shuffler, eliminating the need for a trusted server. This results in a multi-message shuffling scheme with the same optimal trade-off. As a result, our paper establishes the optimal three-way trade-off between privacy, communication, and accuracy for both the central DP and multi-message shuffling frameworks.\footnote{The notion of optimality in our paper is order-wise and within a logarithmic factor in some cases. See results for exact statements.}

\end{abstract}

\section{Introduction}\label{sec:introduction}
In the basic setting of federated learning (FL) \cite{mcmahan2016communication, konevcny2016federated, kairouz2021advances} and analytics (FA), a server wants to execute a specific learning or analytics task  on raw data that is kept on clients' devices. Consider, for example, model updates in FL or histogram estimation in FA, both of which can be modeled as a distributed mean estimation problem. This problem is solved by having the clients communicate targeted messages to the server. The privacy of the users' data is ensured (in terms of explicit differential privacy (DP) \cite{dwork2006calibrating} guarantees) by having the server inject noise into the computed mean before releasing it to the next module (e.g., the server can compute the average model update and corrupt it with the addition of noise). This is called the trusted server or central DP model, as it entrusts the central server with privatization and is one of the most common ways in which federated learning and analytics are implemented today \footnote{We assume a trusted service provider who applies the DP mechanism faithfully. This can be enforced by implementing the DP mechanism inside of a remotely attestable trusted execution environment \cite{allen2019algorithmic}.}. 

In this paper, we start by asking the following question about distributed mean estimation in this central DP setting: given that the server is required to release only a noisy, or equivalently approximate, version of the mean, can the clients communicate ``less information'' to the server? More precisely, given a desired privacy level, can we reduce the communication load of the network without sacrificing accuracy, i.e., maintaining the same (order-wise optimal) accuracy for the desired privacy level? In recent years, there has been significant interest in the central DP model \cite{abadi2016deep} as well as communication efficiency and privacy for FL and FA under different models, including local DP \cite{warner1965randomized, kasiviswanathan2011can, kairouz16, ye2017optimal, barnes2019lower,  acharya2019hadamard,  barnes2020fisher, chen2020breaking}, shuffle \cite{erlingsson2019amplification, feldman2022hiding} and distributed DP \cite{agarwal2018cpsgd, kairouz2021distributed, agarwal2021skellam, chen2022communication, chen2022poisson}; however, this basic question appears to remain unanswered.

The communication load at the clients can be reduced by having the clients communicate partial information about their samples to the server. For example, in the case of model updates, each client can update only a subset of the model coefficients. In the case of histogram estimation, information about a client's sample can be ``split'' into multiple parts, and the client can communicate only one part. However, this means that the server collects less information, or effectively fewer samples to estimate the target quantity. For example, in the aforementioned case of model updates, each model coefficient is updated only by a subset of the clients. A quick calculation reveals that this increases the sensitivity of the estimate to each user's sample and therefore requires the addition of larger noise at the server to achieve the same privacy level, which leads to lower accuracy.

We circumvent this challenge with a simple but insightful observation: when each client communicates only partial information about its sample, we can amplify privacy by randomly selecting the part contributed by each client. A downstream module which has only access to the final estimate revealed by the server  does not know which part was contributed by which client, which leads to privacy amplification. Privacy amplification by subsampling has been studied in the prior literature \cite{li2012sampling, balle2018privacy} but usually only in the case of privacy amplification when the server selects a random subset of the clients (from a larger pool of available clients). In our case, the randomness is incorporated in the compression scheme of each client and it relates to the piece of information communicated by each client and not to the choice of the participating clients. We call this gain privacy amplification via compression and use it to establish the optimal communication-privacy-accuracy trade-off for the central DP model. Note that this same type of gain cannot be leveraged in the local DP model where the server is untrusted and knows the message and the identity of each client. Indeed, in the local DP model the privacy-accuracy trade-off is known to be significantly worse than the central DP model (see Table~\ref{tbl1}). 

This naturally leads to a follow-up question: can we leverage privacy amplification via compression and achieve the same three-way trade-off by using  secure aggregation \cite{chen2022communication} and shuffling \cite{erlingsson2019amplification} type models? The answer is no for secure aggregation. The communication cost  for secure aggregation has been studied in \cite{chen2022fundamental} and is  significantly larger than the communication cost for central DP we establish in this paper (see Table~\ref{tbl1}). On the other hand, the communication cost for shuffling remains an open problem as discussed in \cite{chen2022fundamental}. We resolve this open problem by showing that the optimal central DP trade-off can be also achieved with a multi-message shuffling scheme, which also establishes the optimal communication cost for multi-message shuffling schemes. As before, our scheme leverages a similar privacy amplification gain. Each client communicates partial information about its sample; the identity of the message is erased by the secure shuffler, and hence the untrusted server does not know which part is contributed by each client. However, to achieve the optimal trade-off, it is critical for each client to split its information into multiple messages and employ multiple shuffling rounds by carefully splitting the privacy budget across different rounds. In contrast, a similar gain cannot be leveraged in secure aggregation because the linearity of secure aggregation requires all participating clients to communicate consistent information (same parts), hence precluding privacy amplification by compression. See Table~\ref{tbl1} for a detailed comparison.


\textbf{Our contributions.} We consider distributed mean and frequency estimation as canonical building blocks for FL and FA. We consider both the central DP and the multi-message shuffling models. We characterize the order-optimal privacy-accuracy-communication trade-off for distributed mean estimation and provide an achievable scheme for frequency estimation in the central DP model. Our results reveal that privacy and communication efficiency can be achieved simultaneously with no additional penalty on accuracy. In particular, we show that $\tilde{O}\lp n\min\lp\varepsilon, \varepsilon^2\rp \rp$ and $\tilde{O}\lp \log\lp n\min\lp\varepsilon, \varepsilon^2\rp  \rp \rp$ bits of (per-client) communication are sufficient to achieve the order-optimal error under $(\varepsilon,\delta)$-privacy for mean and frequency estimation respectively, where $n$ is the number of participating clients. Without compression, each client needs $O(d)$ bits and $\log d$ bits for the mean and frequency estimation problems respectively (where $d$ is the number of trainable parameters in FL or the domain size in FA), which means that we can get significant savings in the regime $ n\eps^2 = o(d)$ (assuming $\varepsilon = O(1)$). We note that this is often the relevant regime not only for cross-silo but also for cross-device FL/FA. For instance, in practical FL, $d$ usually ranges from $10^6 \text{--} 10^9$, and $n$, the \emph{per-epoch} sample size, is usually much smaller (e.g., of the order of $10^3 \text{--} 10^5$). For distributed mean estimation, we show that the central DP trade-off can also be achieved with a multi-message shuffling scheme (within a $\log d$ factor in communication cost). Hence our paper establishes the three-way trade-off
between privacy, communication, and accuracy for both the central DP and multi-message
shuffling frameworks, both of which were open problems in the prior literature. 
Compared with local DP where $1$ bit is sufficient when $\varepsilon = O(1)$, this shows that central/shuffling DP has a larger communication cost but can achieve much smaller error (by a factor of $n$) and hence is usually preferable in practical applications. Compared with distributed DP where the server aggregates local (encoded) messages with secure multi-party computation (e.g., \cite{bonawitz2016practical, agarwal2021skellam, chen2022poisson}), we can improve the communication cost by a factor of $n$, therefore showing that the communication cost can be reduced with a trusted server or shuffler. We summarize the comparisons of our main results to local and distributed DP in Table~\ref{tbl1}. 

\begin{table*}

\centering

\begin{tabular}{|c|c|c|c|}
\hline 
$\vphantom{\Theta \lp \frac{d}{\lp n^2\rp } \rp}$  & Communication (bits) & $\ell_2$ error\\ 
\hline 
     
    Local DP \cite{chen2020breaking, feldman2017statistical}   & $\Theta\lp \lceil\varepsilon \rceil\rp$ & $\Theta\lp\frac{d}{n\min\lp \varepsilon^2, \varepsilon\rp}\rp$ \vphantom{$\lp \frac{d}{n\min\lp \varepsilon^2, \varepsilon, b \rp}\rp$}\\
\hline 
   Distributed DP (with SecAgg) \cite{chen2022communication} & $\tilde{O}\lp n^2\min\lp \varepsilon, \varepsilon^2 \rp\rp$ & $\Theta\lp\frac{d}{n^2\min\lp \varepsilon^2, \varepsilon\rp}\rp$  \\
\hline
      Central DP (Theorem~\ref{thm:SGMCS_l2})  & $\tilde{O}\lp n\min\lp \varepsilon, \varepsilon^2 \rp\rp$ & $O\lp\frac{d\log d}{n^2\min\lp \varepsilon^2, \varepsilon\rp}\rp$ \\
\hline
  Shuffle DP (Theorem~\ref{thm:shuffledsqkr}) & $\tilde{O}\lp n\log(d)\min\lp \varepsilon, \varepsilon^2 \rp\rp$& $O\lp\frac{d}{n^2\min\lp \varepsilon^2, \varepsilon\rp}\rp$ \\
\hline
\end{tabular}
\caption{Comparison of the communication costs of $\ell_2$ mean estimation under local, distributed, central, and shuffle DP (with $\delta$ terms hidden). Compared to local DP, we see that error under central DP decays much faster (e.g., $1/n^2$ as opposed to $1/n$); compared to distributed DP with secure aggregation, our schemes achieve similar accuracy but saves the communication cost by a factor of $n$.} \label{tbl1}
\end{table*}

\textbf{Notation.}
Throughout this paper, we use $[m]$ to denote the set of $\lbp 1,...,m \rbp$ for any $m \in \mbb{N}$. Random variables (vectors) $(X_1,...,X_m)$ are denoted as $X^m$. We also make use of Bachmann-Landau asymptotic notation, i.e., $O, o, \Omega, \omega, \text{ and } \Theta$.

\section{Problem Formulation}\label{sec:formulation}
We first present the distributed mean estimation (DME) \cite{an2016distributed} problem under differential privacy. Note that DME is closely related to federated learning with SGD (or similar stochastic optimization methods, such as FedAvg \cite{mcmahan2016communication}), where in each iteration, the server updates the global model by a noisy mean of the local model updates. This noisy estimate is typically obtained by using a DME scheme, and thus one can easily build a distributed DP-SGD scheme (and hence a private FL scheme) from a differentially private DME scheme. Moreover, as shown in \cite{ghadimi2013stochastic}, as long as we have an unbiased estimate of the gradient at each round, the convergence rates of SGD (or DP-SGD) depend on the $\ell_2$ estimation error.

\paragraph{Distributed mean estimation.}
Consider $n$ clients each with local data $x_i \in \mathbb{R}^d$ that satisfies $\lV x_i \rV_2 \leq C$ for some constant $C>0$ (one can think of $x_i$ as a clipped local gradient).  A server wants to learn an estimate $\hat{\mu}$ of the mean $\mu(x^n) \triangleq \frac{1}{n}\sum_i x_i$ from $x^n = (x_1, \dots, x_n)$ after communicating with the $n$ clients. Toward this end, each client locally compresses $x_i$ into a $b$-bit message $\y_i = \msf{enc}_i\lp x_i \rp \in\mcal{Y}$ through a local encoder $\msf{enc}_i: \mcal{X} \mapsto \mcal{Y}$ (where $\lba \mcal{Y} \rba\leq 2^b$ and sends it to the central server, which upon receiving $\y^n = \lp\y_1,\dots,\y_n\rp$ computes an estimate $\hat{\mu} = \msf{dec} \lp \y^n\rp$ that satisfies the following differential privacy:
\begin{definition}[Differential Privacy] \label{def:dp}
The mechanism $\hat{\mu}$ is $(\varepsilon, \delta)$-differentially private if for any neighboring datasets $x^n \eqDef (x_1,..., x_i, ... ,x_n)$, $x'^n \eqDef (x_1,..., x'_i, ..., x_n)$, and measurable $\mcal{S} \subseteq \mcal{\y}$, 
\begin{align*}
    \Pr\lbp \hat{\mu} \in \mcal{S} | x^n\rbp \leq e^\varepsilon\cdot   \Pr\lbp \hat{\mu} \in \mcal{S} | x'^n\rbp + \delta,
\end{align*}
where the probability is taken over the randomness of $\hat{\mu}$.
\end{definition}
Our goal is to design schemes that minimize the $\ell_2^2$ estimation error:
\begin{align*}
    \min_{\lp \msf{enc}_1(\cdot), \dots, \msf{enc}_n(\cdot), \msf{dec}(\cdot)\rp} \max_{x^n} \E\lb \lV \hat{\mu}\lp \msf{enc}_1(x_1),... , \msf{enc}_n(x_n) \rp - \mu(x^n) \rV^2_2 \rb,
\end{align*}
subject to $b$-bit communication and $(\varepsilon, \delta)$-DP constraints.

\paragraph{Distributed frequency estimation.}
Similarly, frequency estimation can also be formulated as a mean estimation problem but with sparse (one-hot) vectors. Let each user $i$ hold an item $x_i$ in a size $d$ domain $\mcal{X}$. The server aims to estimate the histogram of the $n$ items. Without loss of generality, we can assume that $\mcal{X} \eqDef \lbp e_1,...,e_d \rbp \in \lbp 0, 1\rbp^d$ (where $e_j$ is the $j$-th standard basis vector in $\mbb{R}^d$), i.e., each item is expressed as a one-hot vector. Then, the histogram of the $n$ items can be expressed as $\pi\lp x^n \rp \eqDef \sum_{i \in [n]} x_i $. Similar to the mean estimation problem, clients locally compute and then send $\y_i = \msf{enc}_i\lp x_i \rp \in\mcal{Y}$ (for some $\mcal{Y}$ such that $\lba \mcal{Y} \rba\leq 2^b$), and the central server computes the estimate $\hat{\pi} = \msf{dec} \lp \y^n\rp$. Our goal is to design schemes that minimize the $\ell_2^2$ or $\ell_1$ error\footnote{Note that the $\ell_1$ error corresponds to the total variation distance between the true and estimated frequency vectors.}:
\begin{align*}
    \min_{\lp \msf{enc}_1(\cdot), \dots, \msf{enc}_n(\cdot), \msf{dec}(\cdot)\rp} \max_{x^n} \E\lb \lV \hat{\pi}\lp \msf{enc}_1(x_1),... , \msf{enc}_n(x_n) \rp - \pi(x^n) \rV \rb,
\end{align*}
subject to communication and DP constraints (where $\lV\cdot \rV$ can be $\ell_1$ or $\ell_2^2$).

\section{Related Works}\label{sec:related_work}
\paragraph{Federated learning and distributed mean estimation.}
Federated learning \cite{konevcny2016federated, mcmahan2016communication, kairouz2019advances} emerges as a decentralized machine learning framework that provides data confidentiality by retaining clients' raw data on edge devices. In FL, communication between clients and the central server can quickly become a bottleneck \cite{mcmahan2016communication}, so previous works have focused on compressing local model updates via gradient quantization \cite{mcmahan2016communication, alistarh17qsgd, g2019vqsgd, an2016distributed, wen2017terngrad, wangni2018gradient, braverman2016communication}, sparsification \cite{barnes2020rtopk, hu2020federated, farokhi2021gradient}. To further enhance data security, FL is often combined with differential privacy \cite{dwork2006calibrating, abadi2016deep, agarwal2018cpsgd}. Among these works, \cite{hu2020federated} also employs gradient sparsification (or gradient subsampling) to reduce the problem dimensionality.   However, the sparsification takes place \emph{after} the aggregation of local gradients, so the randomness introduced during sparsification cannot be leveraged to amplify the differential privacy guarantee. As a result, this approach leads to a suboptimal trade-off between privacy and communication compared to our scheme.

Note that in this work, we consider FL (or more specifically, the distributed mean estimation) under a \emph{central}-DP setting where the server is trusted, which is different from the local DP model \cite{kasiviswanathan2011can, duchi2013local, nguyn2016collecting, wang2019locally, bhowmick2018protection, chen2020breaking} and the distributed DP model with secure aggregation \cite{bonawitz2016practical, bell2020secure, kairouz2021distributed, agarwal2021skellam, chen2022communication, chen2022poisson}.

A key step in our mean estimation scheme is pre-processing the local data via Kashin's representation \cite{lyubarskii2010uncertainty}. While various compression schemes, based on quantization, sparsification, and dithering have been proposed in the recent literature, Kashin's representation has also been explored in a few works for communication efficiency \cite{fuchs2011spread, studer2012signal, caldas2018expanding, safaryan2020uncertainty} and for LDP \cite{feldman2017statistical} and is particularly powerful in the case of joint communication and privacy constraints as it helps spread the information in a vector evenly in every dimension.

\paragraph{Distributed frequency estimation and heavy hitters.}
Distributed frequency estimation (a.k.a. histogram estimation) is another canonical task that has been heavily studied under a distributed setting with DP. Prior works either focus on 1) the local DP model with or without communication constraints, e.g., \cite{Bassily2015, Bassily2017, bun18hh, bun2019heavy, huang2021frequency} (under an $\ell_\infty$ loss for heavy hitter estimation) and \cite{kairouz16, ye2017optimal, wang2019locally, acharya2019hadamard, acharya2019communication, chen2020breaking, feldman2021lossless, shah2022optimal, feldman2022private} (under an $\ell_1$ or $\ell_2$ loss), or 2) the central DP model \emph{without} communication constraints \cite{dwork2006calibrating, ghosh2012universally, korolova2009releasing, bun2016concentrated, balcer2017differential, zhu2020federated, cormode2022sample}. As suggested in \cite{duchi2013local, acharya2019inference, acharya2019inference2, acharya2020general, barnes2020fisher}, compared to central DP, local DP models usually incur much larger estimation errors and can significantly decrease the utility. In this work, we consider central DP but with explicit communication constraints.

\paragraph{Local DP with shuffling.} A recent line of works \cite{erlingsson2019amplification, cheu2019distributed, balcer2019separating, feldman2022hiding, ghazi2019power, ghazi2020private} considers \emph{shuffle}-DP, showing that one can significantly boost the central DP guarantees by randomly shuffling local (privatized) messages. In this work, we show that the same shuffling technique can be used to achieve the optimal central DP error with nearly optimal communication cost. Therefore, we can obtain the same level of central DP with small communication costs while weakening the security assumption: achieving the optimal communication cost (under central DP) only requires a secure shuffler (as opposed to a fully trusted central server).
\section{Distributed Mean Estimation}\label{sec:subsample_gauss}
In this section, we present a mean estimation scheme that achieves the optimal $\tilde{O}_\delta\lp\frac{C^2d}{n^2\varepsilon^2}\rp$ error under $(\varepsilon, \delta)$-DP while only using $\tilde{O}(n\varepsilon^2)$ bits of per-client communication. 

We first consider a slightly simpler, discrete setting with $\ell_\infty$ geometry (as opposed to the $\ell_2$ mean estimation stated in Section~\ref{sec:formulation}): assume each client observes  $x_i \in \lbp -c, c \rbp^d$ where $c>0$ is a constant, and a central server aims to estimate the mean $\mu\lp x^n \rp \eqDef \frac{1}{n}\sum_{i=1}^n x_i$ by minimizing the $\ell_2^2$ error subject to the privacy and communication constraints. We argue later that solutions to the above $\ell_\infty$ problem can be used for $\ell_2$ mean estimation by applying Kashin's representation. 

To solve the aforementioned $\ell_\infty$ mean estimation problem, first observe that each client's local data can be expressed in $d$ bits since each coordinate of $x_i$ can only take values in $\{c, -c\}$. To reduce the communication load to $o(d)$ bits, each client adopts the following subsampling strategy: for each coordinate $j\in [d]$, client $i$ chooses to send $x_i(j)$ to the server with probability $\gamma$. We assume that this subsampling step is performed with a seed shared by the client and the server\footnote{In practice, such randomness can be agreed by both sides ahead of time, or it can be generated by the server and communicated to each client.}, hence the server knows which coordinates are communicated by each client. Therefore upon receiving the client messages, it can compute the mean of each coordinate and privatize it by adding Gaussian noise. The key observation we leverage is that the randomness in the compression algorithm can be used to amplify privacy or equivalently reduce the magnitude of the Gaussian noise that is needed for privatization. Note that such randomness needs to be kept private from an adversary as the privacy guarantee of the scheme relies on it.

\begin{algorithm}[h]
\caption{Coordinate Subsampled Gaussian Mechanism (CSGM)}\label{alg1}
\begin{algorithmic}
    \STATE {\bfseries Input:} users' data $x_1,...,x_n$, sampling parameters $\gamma \eqDef b/d$, DP parameters $(\varepsilon, \delta)$.
    \STATE {\bfseries Output:} mean estimator $\hat{\mu}$.

\FOR{user $i \in [n]$}
    \FOR{coordinate $j \in [d]$}
	\STATE Draw $Z_{i,j} \diid \msf{Bern}(\gamma)$.
        \IF{$Z_{i,j} = 1$}
        \STATE Send $x_i(j)$ to the server.
        \ENDIF
	\ENDFOR
    \ENDFOR
    \FOR{coordinate $j \in [d]$}
        \STATE Server computes the average $\hat{\mu}_j \eqDef \frac{1}{n\gamma}\sum_{i: Z_{ij}=1} x_i(j) + N(0, \sigma^2)$, where $\sigma^2$ is computed according to \eqref{eq:sigma} in Theorem~\ref{thm:SGM}.
    \ENDFOR
	\STATE {\bfseries Return:} $\hat{\mu} \eqDef (\hat{\mu}_1, \hat{\mu}_2,...,\hat{\mu}_d)$.
\end{algorithmic}
\end{algorithm}

We summarize the scheme in Algorithm~\ref{alg1} and state its privacy and utility guarantees in the following theorem.
\begin{theorem}[$\ell_\infty$ mean estimation.]\label{thm:SGM}
    Let $x_1,...,x_n \in \{-c, c\}^d$ and let 
    {\small
    \begin{equation}\label{eq:sigma}
        \sigma^2 = O\lp \frac{c^2\log(1/\delta)}{n^2\gamma^2} + \frac{c^2d(\log(d/\delta) + \varepsilon)\log(d/\delta)}{n^2\varepsilon^2} \rp. 
    \end{equation}}
    Then for any $\varepsilon, \delta > 0$, Algorithm~\ref{alg1} is $(\varepsilon, \delta)$-DP and yields an unbiased estimator on $\mu$. In addition, the (average) per-client communication cost is $\gamma \cdot d = b$ bits, and the $\ell^2_2$ estimation error of $\hat{\mu}$ is at most
    \begin{align}\label{eq:SGM_accuracy}
        &\E\lb \lV \hat{\mu} - \mu \rV^2_2 \rb 
        \leq \frac{dc^2}{n\gamma} + d\sigma^2 \nonumber \\
        & = O\Big( \frac{d^2c^2}{nb}  +\frac{d^3c^2\log(d/\delta)}{n^2b^2} +\frac{c^2d^2(\log(1/\delta)+ \varepsilon)\log(d/\delta)}{n^2\varepsilon^2}\Big).
    \end{align}
\end{theorem}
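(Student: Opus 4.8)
The plan is to decompose the analysis into three parts: (i) a privacy argument combining the per-coordinate Gaussian mechanism with privacy amplification via the Bernoulli subsampling, (ii) a routine bias computation, and (iii) an error decomposition into a ``sampling variance'' term and a ``privatization noise'' term.

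\textbf{Step 1 (Privacy).} Fix a coordinate $j$. Changing one client's data $x_i(j) \in \{-c,c\}$ changes $\sum_{i: Z_{ij}=1} x_i(j)$ by at most $2c$, but only when $Z_{ij}=1$, which happens with probability $\gamma$. So the per-coordinate release $\hat\mu_j$ is the output of a Gaussian mechanism with $\ell_2$-sensitivity $2c/(n\gamma)$, pre-composed with Bernoulli($\gamma$) subsampling. I would invoke a standard privacy-amplification-by-subsampling bound for the Gaussian mechanism: if the un-subsampled mechanism is $(\varepsilon_0,\delta_0)$-DP, the subsampled one is roughly $(\log(1+\gamma(e^{\varepsilon_0}-1)), \gamma\delta_0)$-DP, i.e. $O(\gamma\varepsilon_0)$ when $\varepsilon_0 = O(1)$. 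To get the final $(\varepsilon,\delta)$ guarantee for the whole vector $\hat\mu=(\hat\mu_1,\dots,\hat\mu_d)$, I would combine the $d$ per-coordinate mechanisms by advanced composition (or, more cleanly, Rényi/zCDP accounting), which costs a $\sqrt{d\log(1/\delta)}$ blow-up in the effective $\varepsilon$. Solving for the $\sigma$ that makes the composed mechanism $(\varepsilon,\delta)$-DP, and being careful to split into the $\varepsilon \le 1$ and $\varepsilon > 1$ regimes, reproduces the two terms in \eqref{eq:sigma}: the first term $c^2\log(1/\delta)/(n^2\gamma^2)$ is the ``un-amplified Gaussian'' contribution, and the second term $c^2 d(\log(d/\delta)+\varepsilon)\log(d/\delta)/(n^2\varepsilon^2)$ is what advanced composition over $d$ coordinates with amplification factor $\gamma$ demands (note the $\gamma$'s cancel against the $\gamma^2$ in the sensitivity denominator, leaving the stated $d$-dependence).

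\textbf{Step 2 (Unbiasedness and error decomposition).} For each $j$, $\E[\frac{1}{n\gamma}\sum_{i:Z_{ij}=1}x_i(j)] = \frac{1}{n\gamma}\sum_i \gamma x_i(j) = \mu_j$, and the added Gaussian has mean zero, so $\hat\mu$ is unbiased; the communication claim $\gamma d = b$ is immediate from $\E[\sum_j Z_{ij}] = \gamma d$. By unbiasedness, $\E\lV\hat\mu-\mu\rV_2^2 = \sum_j \Var(\hat\mu_j)$. For a fixed $j$, $\Var(\hat\mu_j) = \Var\!\lp\frac{1}{n\gamma}\sum_{i:Z_{ij}=1}x_i(j)\rp + \sigma^2$. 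The first variance is a sum of $n$ independent terms $\frac{1}{n\gamma}Z_{ij}x_i(j)$, each with variance $\frac{1}{n^2\gamma^2}\cdot x_i(j)^2\cdot\gamma(1-\gamma) \le \frac{c^2}{n^2\gamma}$, giving at most $\frac{c^2}{n\gamma}$ per coordinate and $\frac{dc^2}{n\gamma}$ in total. Adding $d\sigma^2$ and substituting \eqref{eq:sigma} with $\gamma = b/d$ yields \eqref{eq:SGM_accuracy}: the $\frac{dc^2}{n\gamma} = \frac{d^2c^2}{nb}$ sampling term, the $\frac{d^3c^2\log(d/\delta)}{n^2b^2}$ term from $d$ times the first part of $\sigma^2$, and the $\frac{c^2d^2(\log(1/\delta)+\varepsilon)\log(d/\delta)}{n^2\varepsilon^2}$ term from $d$ times the second part.

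\textbf{Main obstacle.} The routine parts are Steps 2; the delicate part is Step 1, specifically getting the \emph{joint} privacy accounting over the $d$ coordinates right while simultaneously exploiting subsampling amplification. Naively applying amplification per-coordinate and then advanced composition is lossy in the boundary cases, and one must be careful that the subsampling events $\{Z_{ij}=1\}$ across coordinates $j$ are independent, so the ``which coordinates were sent'' pattern for client $i$ is itself data-independent and public — this is what legitimizes treating each coordinate's release as an independent subsampled Gaussian rather than needing a group-privacy argument. I expect the cleanest route is to pass through Rényi DP: bound the Rényi divergence of the subsampled Gaussian per coordinate, add over $d$ coordinates, and convert to $(\varepsilon,\delta)$ at the end, which automatically handles both the $\varepsilon\le 1$ and $\varepsilon>1$ regimes and produces the $(\log(d/\delta)+\varepsilon)$ factor. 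A secondary check is that the first term of $\sigma^2$ (which does not involve $d$ in the way amplification would predict) is there to cover the regime where $\gamma$ is so small that the amplified bound degenerates and one must fall back to the trivial ``sensitivity $2c/(n\gamma)$, no amplification'' Gaussian guarantee; I would present $\sigma^2$ as the max (hence sum, up to constants) of these two safe choices.
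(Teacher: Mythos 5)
Your plan matches the paper's proof essentially step for step: the paper also bounds the error by a direct variance computation giving $\frac{dc^2}{n\gamma} + d\sigma^2$, and proves privacy by treating each coordinate as a Gaussian mechanism on the (subsampled) sum, invoking the standard amplification-by-subsampling lemma, and then combining the $d$ coordinates via the advanced composition theorem before solving for $\sigma^2$ (your Rényi-DP route is mentioned by the paper only as a tighter alternative used in experiments). One small caution: the sampling pattern $Z_{ij}$ must be treated as internal randomness hidden from the adversary (known only to the trusted server), not ``public,'' since the amplification step relies precisely on this.
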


\begin{remark}[Unbiasedness]
Note that for mean estimation, we usually want the final mean estimator to be unbiased since standard convergence analyses of SGD \cite{ghadimi2013stochastic} require an unbiased estimate of the true gradient in each optimization round. Given that our proposed mean estimation schemes (Algorithm~\ref{alg1} and Algorithm~\ref{alg2} in the next section) are all unbiased, we can combine them with SGD/federated averaging  and readily apply  \cite{ghadimi2013stochastic} to obtain a convergence guarantee for the resulting communication-efficient DP-SGD.
\end{remark}

For the $\ell_2$ mean estimation task formulated in Section~\ref{sec:formulation}, we pre-process local vectors by first computing their Kashin's representations and then performing randomized rounding \cite{kashin1977section, vershynin2018high, feldman2017statistical, chen2020breaking}. 
Specifically, if $x_i$ has $\ell_2$ norm bounded by $C$, then its Kashin's representation (with respect to a tight frame $K \in \mbb{R}^{d\times D}$ where $D = \Theta(d)$) $\tilde{x}_i$ has bounded $\ell_\infty$ norm: $\lV\tilde{x}_i\rV_\infty \leq c = O\lp \frac{C}{\sqrt{d}}\rp$ and satisfies $x_i = K\cdot \tilde{x}_i$. This allows us to convert the $\ell_2$ geometry to an $\ell_\infty$ geometry. Furthermore, by randomly rounding each coordinate of $\tilde{x}_i$ to $\{-c,c\}$ (see for example \cite{chen2020breaking}), we can readily apply Algorithm 1 and obtain the following result for $\ell_2$ mean estimation as a corollary:

\begin{corollary}[$\ell_2$ mean estimation]\label{cor:SGM_l2}
     Let $x_1,...,x_n \in \mcal{B}_2(C)$ (i.e., $\lV x_i \rV_2 \leq C$ for all $i \in [n]$).     
 Then for any $\varepsilon, \delta > 0$, Algorithm~\ref{alg1} combined with Kashin's representation and randomized rounding yields an $(\varepsilon, \delta)$-DP unbiased estimator for $\mu$ with $\ell^2_2$ estimation error bounded by 
    \begin{equation}\label{eq:SGM_accuracy_l2}
        O\lp  \underbrace{\frac{dC^2}{nb}+\frac{C^2d^2\log(1/\delta)}{n^2b^2}}_{(\alpha)}+ \underbrace{\frac{C^2d(\log(d/\delta) + \varepsilon)\log(d/\delta)}{n^2\varepsilon^2}}_{(\beta)}\rp.
    \end{equation}
\end{corollary}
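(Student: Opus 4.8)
The plan is to reduce the $\ell_2$ problem to the $\ell_\infty$ problem of Theorem~\ref{thm:SGM} via Kashin's representation plus randomized rounding, and then push the error bound \eqref{eq:SGM_accuracy} through the linear decoding map. First I would recall the properties of Kashin's representation: for a tight frame $K \in \mbb{R}^{d \times D}$ with $D = \Theta(d)$, every $x$ with $\lV x \rV_2 \leq C$ admits a representation $\tilde{x} \in \mbb{R}^D$ with $x = K \tilde{x}$ and $\lV \tilde{x} \rV_\infty \leq c = O(C/\sqrt{d})$. Each client computes $\tilde{x}_i$, then randomly rounds each coordinate to $\{-c, c\}$ so that the rounded vector $\bar{x}_i$ satisfies $\E[\bar{x}_i \mid \tilde{x}_i] = \tilde{x}_i$ and $\lV \bar{x}_i \rV_\infty = c$. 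Now $\bar{x}_1, \dots, \bar{x}_n \in \{-c,c\}^D$ is exactly the input format required by Algorithm~\ref{alg1}, so running CSGM in the transformed domain (with $d$ replaced by $D = \Theta(d)$) produces an unbiased estimate $\hat{\nu}$ of $\frac{1}{n}\sum_i \bar{x}_i$. The final estimator is $\hat{\mu} \eqDef K \hat{\nu}$.

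Next I would verify the three claimed properties. \textbf{Privacy:} the decoding $K$ is a fixed linear post-processing of the differentially private output $\hat{\nu}$, and randomized rounding is itself data-dependent randomness applied before privatization that only commutes with the DP guarantee as pre-processing on the client side; since Theorem~\ref{thm:SGM} gives $(\varepsilon,\delta)$-DP for $\hat{\nu}$ and DP is closed under post-processing, $\hat{\mu}$ is $(\varepsilon,\delta)$-DP. \textbf{Unbiasedness:} by the tower rule, $\E[\hat{\mu}] = K\,\E[\hat{\nu}] = K \cdot \frac{1}{n}\sum_i \E[\bar{x}_i] = K \cdot \frac{1}{n}\sum_i \tilde{x}_i = \frac{1}{n}\sum_i x_i = \mu$, using unbiasedness of CSGM and of randomized rounding, and linearity of $K$. \textbf{Accuracy:} this is the main computational step. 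Write $\hat{\mu} - \mu = K(\hat{\nu} - \bar{\mu}) + K(\bar{\mu} - \tilde{\mu})$ where $\bar{\mu} = \frac1n\sum_i \bar{x}_i$ and $\tilde\mu = \frac1n\sum_i \tilde x_i$. The key fact is that $K$, being (a scaled) tight frame, has operator norm $O(1)$ from $\ell_2(\mbb{R}^D)$ to $\ell_2(\mbb{R}^d)$, so $\E\lV \hat\mu - \mu\rV_2^2 \lesssim \E\lV \hat\nu - \bar\mu\rV_2^2 + \E\lV\bar\mu - \tilde\mu\rV_2^2$. The first term is bounded by \eqref{eq:SGM_accuracy} with $(d,c)$ replaced by $(\Theta(d), O(C/\sqrt d))$, which after substituting $c^2 = O(C^2/d)$ turns the $dc^2$-type terms into $C^2$-type terms and yields exactly $(\alpha) + (\beta)$ in \eqref{eq:SGM_accuracy_l2}. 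The second term is the randomized-rounding variance: coordinatewise, $\Var(\bar x_i(k)) \leq c^2$, these are independent across $i$, so $\E\lV\bar\mu - \tilde\mu\rV_2^2 \leq \frac{D c^2}{n} = O\lp\frac{C^2}{n}\rp$, which is dominated by the $(\alpha)$ term $\frac{dC^2}{nb}$ as long as $b \leq d$ (always true). Collecting terms gives the stated bound.

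I expect the main obstacle to be the bookkeeping in the accuracy step — specifically, making sure that (i) the operator-norm bound on $K$ is invoked correctly (it is crucial that Kashin frames are chosen so that $\lV K \rV_{2\to 2} = O(1)$ while simultaneously $\lV\tilde x\rV_\infty = O(C/\sqrt d)$; these two are compatible precisely because $D = \Theta(d)$), and (ii) the substitution $c^2 \asymp C^2/d$ and $d \rightsquigarrow D = \Theta(d)$ is carried through every one of the three terms in \eqref{eq:SGM_accuracy} consistently, and that the extra rounding-variance term is genuinely dominated. A secondary subtlety worth a sentence is confirming that randomized rounding does not interact badly with the subsampling randomness $Z_{i,j}$ in Algorithm~\ref{alg1} — but since the two sources of randomness are independent and both unbiased, conditioning arguments separate them cleanly. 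Everything else (closedness of DP under post-processing, linearity of expectation) is routine.
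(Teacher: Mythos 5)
Your proposal is correct and follows the same route the paper intends: the paper treats Corollary~\ref{cor:SGM_l2} as an immediate consequence of Theorem~\ref{thm:SGM} after Kashin's representation and randomized rounding, i.e.\ substituting $c=O(C/\sqrt{d})$ and $d\rightsquigarrow D=\Theta(d)$ into \eqref{eq:SGM_accuracy}. Your additional bookkeeping (the $O(1)$ operator norm of the tight frame $K$, DP preservation under per-client randomized pre-processing and post-processing, and the rounding-variance term $O(C^2/n)$ being dominated by $dC^2/(nb)$ since $b\leq d$) is exactly the right way to make that reduction rigorous.
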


The first term  $(\alpha)$ in the estimation error in Corollary~\ref{cor:SGM_l2} is the error due to compression, and the second term $(\beta)$ is the error due to privatization (which is order-optimal under $(\varepsilon, \delta)$-DP up to an additional $\log(d/\delta)$ factor as we discuss in Section~\ref{sec:lowerbounds}). In particular, if we ignore the poly-logarithmic terms and assume $\varepsilon = O(1)$, the privatization error $(\beta)$ can be simplified to $\tilde{O}\lp \frac{dC^2}{n^2\varepsilon^2} \rp$, which dominates the total $\ell_2^2$ error when $b = \tilde{\Omega}_\delta\lp \max\lp n\varepsilon^2, \sqrt{d}\varepsilon \rp \rp$, i.e. in this regime the total $\ell_2^2$ error is order-wise equal to the optimal centralized DP error $(\beta)$.  This implies that no more than $b = \tilde{\Omega}_\delta\lp \max\lp n\varepsilon^2, \sqrt{d}\varepsilon \rp \rp$ bits per client are needed to achieve the order-optimal $\ell_2^2$ error under $(\varepsilon, \delta)$-DP.  

In the next section, we introduce a modification to Algorithm 1, which allows the removal of the $\Omega\lp \sqrt{d}\varepsilon \rp$ term in the communication cost.

\subsection{Dimension-free communication cost}

In order to remove the  dependence on the dimension $d$ in the communication cost $b = \tilde{\Omega}_\delta\lp \max\lp n\varepsilon^2, \sqrt{d}\varepsilon \rp \rp$ from the previous section, we need to improve the performance of our scheme in the \emph{small-sample} regime $n\varepsilon^2=o(\sqrt{d}\varepsilon)$. Equivalently, we want to be able to achieve the centralized DP performance by using only  $b=\tilde{\Omega}_\delta\lp n\varepsilon^2\rp$ bits per client when $n\varepsilon=o(\sqrt{d})$. Assuming $\varepsilon\approx 1$, note that this implies that the total communication bandwidth of the system $nb=o(d)$, i.e. the server can receive information about at most $nb=n^2\varepsilon^2=o(d)$ coordinates. We show that in this regime the performance of the scheme can be improved  by a priori restricting the server's attention to a subset of the coordinates.  

We make the following modification to Algorithm~\ref{alg1}: before performing Algorithm~\ref{alg1}, the server randomly selects $d' \approx O\lp\min(d, n^2\varepsilon^2)\rp$ coordinates and only requires clients to run Algorithm~\ref{alg1} on them. 
We present the modified scheme in Algorithm~\ref{alg2} and summarize its performance in Theorem~\ref{thm:SGMCS_l2}.

\begin{algorithm}[h]
\caption{CSGM with Coordinate Pre-selection}\label{alg2}
\begin{algorithmic}
    \STATE {\bfseries Input:} users' data $x_1,...,x_n$, coordinate selection $d' \leq d$, sampling parameters $\gamma \eqDef b/d'$, DP parameters $(\varepsilon, \delta)$.
    \STATE {\bfseries Output:} mean estimator $\hat{\mu}$.
    \STATE   Randomly select $d'$ coordinates $\mcal{J} \eqDef \{j_1, ..., j_{d'} \} \subset [d]$.
    \FOR{user $i \in [n]$}
     \STATE Pre-processing $x_i$ by restricting it on $\mcal{J}$:\\
     $x_i(\mcal{J}) \eqDef (x_i(j_1), ..., x_i(j_{|\mcal{J}|})).$
    \ENDFOR
    \STATE Apply CSGM (Algorithm~\ref{alg1}) on $x_i(\mcal{J}), i \in [n]$: \\
    $\hat{\mu}_{\mcal{J}} \la \msf{CSGM}\lp x_i(\mcal{J}), i \in [n] \rp$.
        \FOR{ $j \in [d]$}
         \IF{ $j \in \mcal{J}$}
           \STATE $\hat{\mu}_j = \hat{\mu}_\mcal{J}(j)$.
           \ELSE
           \STATE $\hat{\mu}_j = 0$.
           \ENDIF
        \ENDFOR

	\STATE {\bfseries Return:} $\hat{\mu} \eqDef \lp \frac{d}{d'}\hat{\mu}_1, \frac{d}{d'}\hat{\mu}_2,...,\frac{d}{d'}\hat{\mu}_d\rp$.
\end{algorithmic}
\end{algorithm}

Similarly, we can obtain the following $\ell_2$ mean estimation via Kashin's representations:
\begin{theorem}[$\ell_2$ mean estimation.]\label{thm:SGMCS_l2}
     Let $x_1,...,x_n \in \mcal{B}_2(C)$ (i.e., $\lV x_i \rV_2 \leq C$ for all $i \in [n]$), $d' = \min\lp d, nb, \frac{n^2\varepsilon^2}{(\log(1/\delta)+\varepsilon)\log(d/\delta)} \rp$, and
    \begin{equation}\label{eq:sigma2_l2}
        \sigma^2 = O\lp \frac{C^2\log(1/\delta)}{d'n^2\gamma^2}+\frac{C^2d'(\log(1/\delta) + \varepsilon)\log(d'/\delta)}{dn^2\varepsilon^2}\rp.
    \end{equation}

    Then for any $\varepsilon, \delta > 0$, Algorithm~\ref{alg2} is $(\varepsilon, \delta)$-DP. In addition, the (average) per-client communication cost is $\gamma d = b$ bits, and the $\ell^2_2$ estimation error is at most 
    \begin{equation}\label{eq:finall2accuracy}
        O\lp \max\lp\frac{C^2d \log(d/\delta)}{nb}, \frac{C^2d \log(d/\delta)(\log(1/\delta) + \varepsilon)}{n^2\varepsilon^2}\rp\rp.
    \end{equation}
\end{theorem}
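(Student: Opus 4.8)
The plan is to prove the theorem in three moves: reduce the $\ell_2$ problem to the discrete $\ell_\infty$ problem already handled by Theorem~\ref{thm:SGM} through Kashin's representation; establish privacy by conditioning on the data-independent coordinate pre-selection; and bound the error by conditioning on that pre-selection together with the sampling and noise, then optimizing over $d'$. For the reduction, each client forms the Kashin representation $\wtild x_i \in \mbb{R}^{D}$ of $x_i$ with respect to a fixed tight frame $K \in \mbb{R}^{d\times D}$, $D = \Theta(d)$, so that $x_i = K\wtild x_i$, $\lV \wtild x_i\rV_\infty \le c = O\lp C/\sqrt d\rp$, and $\lV Kv\rV_2 = O\lp\lV v\rV_2\rp$, and then randomly rounds each coordinate of $\wtild x_i$ to $\set{-c,c}$ (which keeps the representation unbiased and inflates the per-coordinate variance by at most $c^2$). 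Running Algorithm~\ref{alg2} on the $D$-dimensional rounded coefficients and reporting $K$ times its output reduces everything to the discrete $\ell_\infty$ setting with ambient dimension $\Theta(d)$ and coordinate range $\set{-c,c}$, with the final $\ell_2^2$ error no larger, up to constants, than that of the $D$-dimensional estimate; this is exactly the passage described before Corollary~\ref{cor:SGM_l2}.

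For privacy, the key observation is that the set $\mcal{J}$ of size $d'$ is chosen independently of the data, so conditioned on $\mcal{J}$ Algorithm~\ref{alg2} is precisely Algorithm~\ref{alg1} (CSGM) run in dimension $d'$ with coordinate range $\set{-c,c}$ and sampling rate $\gamma = b/d'$, post-composed with zero-filling, rescaling by $d/d'$, and the linear map $K$. Since post-processing preserves differential privacy, it suffices to invoke Theorem~\ref{thm:SGM} with $d$ replaced by $d'$ and $c = O(C/\sqrt d)$: the variance $\sigma^2$ in \eqref{eq:sigma2_l2} is (at least) the level that theorem needs for $(\varepsilon,\delta)$-DP in dimension $d'$, and hence Algorithm~\ref{alg2} is $(\varepsilon,\delta)$-DP. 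The only care here is to track which logarithmic factor is a $\log(1/\delta)$ coming from the Gaussian privacy calibration and which is a $\log(d'/\delta)$ coming from the union bound over the $d'$ active coordinates --- this is the distinction written into \eqref{eq:sigma2_l2}.

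The error analysis is the substantive part. Let $p$ be the probability that a fixed coordinate lies in $\mcal{J}$, so $1/p = \Theta(d/d')$, and let $\hat\nu$ be the CSGM estimate in dimension $d'$, which is unbiased for the restricted mean by Theorem~\ref{thm:SGM}, with conditional error $E(\mcal{J}) \eqDef \E\lb \lV \hat\nu - \mu(\mcal{J})\rV_2^2 \mid \mcal{J}\rb$. Since the reported estimate is $\tfrac1p\hat\nu$ on the selected coordinates and $0$ off them, a law-of-total-variance computation over $\mcal{J}$ and the internal randomness yields
\begin{equation*}
\E\lV \hat\mu - \mu\rV_2^2 \;=\; \frac{1}{p^2}\,\E_{\mcal{J}}\!\lb E(\mcal{J})\rb \;+\; \frac{1-p}{p}\,\lV \mu\rV_2^2 .
\end{equation*}
The second term is the cost of discarding the un-selected coordinates; bounding $\lV\mu\rV_2 \le C$ it equals $O\lp \tfrac{d}{d'}C^2\rp$, and substituting $d' = \min\lp d,\, nb,\, \tfrac{n^2\varepsilon^2}{(\log(1/\delta)+\varepsilon)\log(d/\delta)}\rp$ makes $\tfrac{d}{d'}C^2 = \max\lp C^2,\ \tfrac{C^2d}{nb},\ \tfrac{C^2d(\log(1/\delta)+\varepsilon)\log(d/\delta)}{n^2\varepsilon^2}\rp$, which is exactly the claimed two-term maximum once the (trivial) $C^2$ is absorbed. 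For the first term, insert $E(\mcal{J}) \le \tfrac{d'c^2}{n\gamma} + d'\sigma^2$ from Theorem~\ref{thm:SGM} (with $c^2 = O(C^2/d)$, $\gamma = b/d'$, $\sigma^2$ as in \eqref{eq:sigma2_l2}) into $\tfrac1{p^2}E(\mcal{J}) = \Theta\lp(d/d')^2\rp E(\mcal{J})$: one finds the compression part is $\Theta\lp \tfrac{C^2 d}{nb}\rp$; the subsampled-Gaussian noise-floor part is $\Theta\lp \tfrac{C^2 d d'\log(1/\delta)}{n^2 b^2}\rp$, which is dominated by the compression part because $d' \le nb$; and the privatization part is $\Theta\lp \tfrac{C^2 d(\log(1/\delta)+\varepsilon)\log(d'/\delta)}{n^2\varepsilon^2}\rp$, which is at most the claimed privatization term since $\log(d'/\delta) \le \log(d/\delta)$. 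Collecting the pieces gives \eqref{eq:finall2accuracy}.

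The main obstacle is this last step: one must check that a \emph{single} value of $d'$ simultaneously balances three competing error sources --- compression ($\propto 1/b$), the subsampled-Gaussian noise floor ($\propto 1/b^2$, the term appearing in Corollary~\ref{cor:SGM_l2}), and privatization ($\propto 1/\varepsilon^2$) --- against the $\propto d/d'$ cost of narrowing the server's attention, and in particular that after the $\Theta\lp(d/d')^2\rp$ inflation inherent in the inverse-probability estimator the $1/b^2$ term is swallowed by the $1/b$ term (this is where the bounds $d' \le nb$ and $d' \le n^2\varepsilon^2/((\log(1/\delta)+\varepsilon)\log(d/\delta))$ are used). A secondary subtlety, as noted, is calibrating $\sigma^2$ in the reduced dimension so that it is large enough for $(\varepsilon,\delta)$-DP yet small enough that, post-rescaling, the privatization error stays at the centralized level $\tilde O\lp C^2 d / (n^2\varepsilon^2)\rp$; keeping the $\log(1/\delta)$ and $\log(d'/\delta)$ factors separate is precisely what produces the exact forms of \eqref{eq:sigma2_l2} and \eqref{eq:finall2accuracy}.
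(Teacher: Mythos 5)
Your proposal is correct and follows essentially the same route as the paper's own proof: the paper likewise reduces to the $\ell_\infty$ problem via Kashin's representation, gets privacy by invoking Theorem~\ref{thm:SGM} in the reduced dimension $d'$ (the pre-selection being data-independent), and bounds the error by the identical decomposition --- your $\frac{1-p}{p}\lV\mu\rV_2^2$ term is exactly the paper's $\E\lb\lV\mu-\mu_{\mcal{J}}\rV_2^2\rb$, and the $(d/d')^2$-inflated CSGM error is handled the same way, with $d'\le nb$ absorbing the $1/b^2$ term and the second cap on $d'$ matching the privatization level. The only cosmetic difference is that you spell out the post-processing/conditioning privacy argument and the law-of-total-variance identity that the paper leaves implicit.
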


\begin{corollary}\label{cor:SGMCS_l2}
As long as $b = \Omega\lp \frac{n\varepsilon^2}{\log\lp1/\delta\rp+ \varepsilon} \rp$, the $\ell^2_2$ error of mean estimation is 
$$ O\lp \frac{C^2d \log(d/\delta)(\log(1/\delta) + \varepsilon)}{n^2\varepsilon^2}\rp. $$
\end{corollary}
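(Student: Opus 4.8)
The plan is to derive Corollary~\ref{cor:SGMCS_l2} as an immediate consequence of Theorem~\ref{thm:SGMCS_l2}, by identifying which of the two terms inside the $\max$ in \eqref{eq:finall2accuracy} dominates once the per-client budget $b$ is large enough. Theorem~\ref{thm:SGMCS_l2} already supplies the hard content: with $d' = \min\lp d,\ nb,\ \frac{n^2\varepsilon^2}{(\log(1/\delta)+\varepsilon)\log(d/\delta)}\rp$ and $\sigma^2$ as in \eqref{eq:sigma2_l2}, Algorithm~\ref{alg2} is $(\varepsilon,\delta)$-DP, uses $b$ bits per client, and incurs $\ell_2^2$ error
\[
O\lp \max\lp\frac{C^2d \log(d/\delta)}{nb},\ \frac{C^2d \log(d/\delta)(\log(1/\delta) + \varepsilon)}{n^2\varepsilon^2}\rp\rp.
\]
So all that remains is to show that the hypothesis $b = \Omega\lp \frac{n\varepsilon^2}{\log(1/\delta)+\varepsilon}\rp$ forces the first (compression) term to be no larger, up to a constant, than the second (privatization) term.

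First I would cancel the common factor $C^2 d\log(d/\delta)/n$ and reduce the desired inequality $\frac{C^2d \log(d/\delta)}{nb} \le \frac{C^2d \log(d/\delta)(\log(1/\delta)+\varepsilon)}{n^2\varepsilon^2}$ to $\frac{1}{b} \le \frac{\log(1/\delta)+\varepsilon}{n\varepsilon^2}$, i.e.\ to $b \ge \frac{n\varepsilon^2}{\log(1/\delta)+\varepsilon}$. This is exactly the stated condition, with the implied constant in $\Omega(\cdot)$ absorbed. Hence in this regime the $\max$ in \eqref{eq:finall2accuracy} equals the privatization term up to a constant, which is precisely the claimed bound $O\lp \frac{C^2d \log(d/\delta)(\log(1/\delta) + \varepsilon)}{n^2\varepsilon^2}\rp$.

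The only side-condition worth checking is that the parameter choices in Theorem~\ref{thm:SGMCS_l2} remain meaningful in this regime: since $\log(d/\delta)\ge 1$ and, under the hypothesis, $nb \ge \frac{n^2\varepsilon^2}{\log(1/\delta)+\varepsilon} \ge \frac{n^2\varepsilon^2}{(\log(1/\delta)+\varepsilon)\log(d/\delta)}$, the minimum defining $d'$ simplifies to $d' = \min\lp d,\ \frac{n^2\varepsilon^2}{(\log(1/\delta)+\varepsilon)\log(d/\delta)}\rp$, so Theorem~\ref{thm:SGMCS_l2} applies verbatim (in the degenerate case where this quantity is $O(1)$, the claimed error bound is itself $\Omega(C^2)$ and is trivially met by outputting the zero vector, since $\lV\mu\rV_2\le C$). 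There is no genuine obstacle: the corollary simply records that, at or above the communication threshold $\tilde\Theta\lp n\varepsilon^2/(\log(1/\delta)+\varepsilon)\rp$, the compression error of Algorithm~\ref{alg2} is dominated by its order-optimal central-DP privatization error — all the real work, namely the privacy accounting for the coordinate-subsampled Gaussian mechanism with pre-selection and the accompanying variance and compression-error bounds, is already carried out in Theorem~\ref{thm:SGMCS_l2}.
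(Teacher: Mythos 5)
Your proposal is correct and matches the paper's (implicit) argument: the corollary is read off from Theorem~\ref{thm:SGMCS_l2} by observing that the hypothesis $b = \Omega\lp n\varepsilon^2/(\log(1/\delta)+\varepsilon)\rp$ makes the compression term $\frac{C^2d\log(d/\delta)}{nb}$ at most a constant times the privatization term, so the $\max$ in \eqref{eq:finall2accuracy} is the privatization term. Your extra check that the minimum defining $d'$ simplifies in this regime is harmless but not needed.
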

As suggested by Corollary~\ref{cor:SGMCS_l2}, we see that when $\varepsilon = O(1)$, $b =\tilde{\Omega}\lp n\varepsilon^2 \rp$ bits per client are sufficient to achieve the order-optimal $\tilde{O}_\delta\lp\frac{c^2d}{n^2\varepsilon^2}\rp$ error (even in the small sample regime $n \leq \sqrt{d}$), i.e. the communication cost of the scheme is independent of the dimension $d$.

\subsection{Lower bounds}\label{sec:lowerbounds}
In this section, we argue that the estimation error in Theorem~\ref{thm:SGMCS_l2} is optimal up to an $\log\lp d/\delta \rp$ factor. Specifically, Theorem~5.3 of \cite{chen2022fundamental} shows that any $b$-bit \emph{unbiased} compression scheme will incur $\Omega\lp \frac{C^2d}{nb}\rp$ error for the $\ell_2$ mean estimation problem (even when privacy is not required). This matches the first term in \eqref{eq:finall2accuracy} up to a logarithmic factor. 

On the other hand, the centralized Gaussian mechanism (under a central $(\varepsilon, \delta)$-DP) achieves $O\left( \frac{C^2d\log(1/\delta)}{n^2\varepsilon^2} \right)$ MSE \cite{balle2018improving} (which is order-optimal in most parameter regimes; see the lower bounds in Theorem~3.1 of \cite{cai2021cost} or Proposition 23 of \cite{canonne2020discrete}). Hence, we can conclude that the total communication received by the server has to be at least $\Omega(n^2\varepsilon^2)$ bits in order to achieve the same error as the Gaussian mechanism. Therefore, the (average) per-client communication cost has to be at least $\Omega(n\varepsilon^2)$ bits. Hence we conclude that Algorithm~\ref{alg2} is optimal (up to a logarithmic factor). 

For completeness, we state the communication lower bound in the following theorem:
\begin{theorem}[Communication lower bound for mean estimation under central DP]
    Let $x_1,...,x_n \in \mcal{B}_2(C)$. Let $Y_1, ..., Y_n$ be any $b$-bit local reports generated from a (possibly interactive) compressor and be unbiased in a sense that
    $$ \E\lb \sum_i Y_i \rb = \sum_i x_i. $$
    Then if $$ \E\lb \lV \frac{1}{n}\sum_i Y_i - \frac{1}{n}\sum_i x_i\rV^2_2\rb \leq O\lp\frac{C^2d\log(1/\delta)}{n^2\varepsilon^2}\rp, $$
    it holds that
    $$ b = \Omega\lp \frac{n\varepsilon^2}{\log(1/\delta)} \rp. $$
\end{theorem}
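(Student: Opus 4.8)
The plan is to obtain the bound as an immediate consequence of two ingredients: the dimension-dependent lower bound on \emph{unbiased} compression for $\ell_2$ mean estimation (Theorem~5.3 of \cite{chen2022fundamental}), and the accuracy hypothesis in the statement. No fresh probabilistic argument is needed beyond what the cited theorem already supplies; the theorem here just packages the informal ``combine Thm~5.3 with the Gaussian-mechanism rate'' argument of Section~\ref{sec:lowerbounds} into a clean statement.

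First I would invoke Theorem~5.3 of \cite{chen2022fundamental}: for \emph{any} (possibly interactive) protocol in which each client emits a $b$-bit message $Y_i$ and the aggregate $\sum_i Y_i$ is an unbiased estimate of $\sum_i x_i$, there exist an absolute constant $c_0 > 0$ and inputs $x_1, \dots, x_n \in \mcal{B}_2(C)$ (a worst-case, or suitably randomized, ``hard'' instance) such that
\begin{equation*}
    \E\lb \lV \frac{1}{n}\sum_i Y_i - \frac{1}{n}\sum_i x_i \rV_2^2 \rb \;\geq\; c_0\, \frac{C^2 d}{n b}.
\end{equation*}
Morally this says that one cannot beat randomized coordinate sparsification in the regime $b \lesssim d$, which is exactly the relevant regime here (we are trying to show that $b$ must exceed a quantity that is $o(d)$ in the interesting parameter range); I would use it as a black box.

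Next I would chain this with the hypothesis of the theorem. By assumption the protocol's $\ell_2^2$ error is at most $c_1\, C^2 d \log(1/\delta)/(n^2\varepsilon^2)$ on every input, in particular on the hard instance produced above. Combining the two inequalities and cancelling the common factor $C^2 d$ gives $c_0/(nb) \le c_1\log(1/\delta)/(n^2\varepsilon^2)$, and rearranging yields
\begin{equation*}
    b \;\geq\; \frac{c_0}{c_1}\cdot\frac{n\varepsilon^2}{\log(1/\delta)} \;=\; \Omega\!\lp \frac{n\varepsilon^2}{\log(1/\delta)} \rp,
\end{equation*}
which is exactly the claim.

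The only substantive point — and the main obstacle — is to confirm that Theorem~5.3 of \cite{chen2022fundamental} is stated (or readily re-derived) under precisely the hypotheses in use here: $b$-bit messages, unbiasedness of the \emph{sum} $\sum_i Y_i$ rather than of each $Y_i$ individually, and possibly interactive protocols. If a self-contained derivation of the $\Omega(C^2 d/(nb))$ bound at this level of generality were wanted, I would obtain it via a Fisher-information-under-communication-constraints argument in the spirit of \cite{barnes2019lower, barnes2020fisher}: place a Gaussian prior $x_i \diid N\!\lp 0, \tfrac{C^2}{d}I_d\rp$ (conditioned to lie in $\mcal{B}_2(C)$), lower-bound the risk of any unbiased estimator by the inverse Fisher information of the whole transcript, and bound the Fisher information that a $b$-bit (even interactive) report contributes by $O(b)$ per client, hence $O(nb)$ in total against the $nd$ latent coordinates. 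Since the cited theorem already delivers the needed inequality, however, the proof collapses to the two-line computation above.
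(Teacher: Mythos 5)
Your proposal is correct and follows essentially the same route as the paper: the paper's own argument (given in Section~\ref{sec:lowerbounds}, with no separate appendix proof) likewise invokes Theorem~5.3 of \cite{chen2022fundamental} as a black box for the $\Omega\lp C^2 d/(nb)\rp$ unbiased-compression lower bound and chains it against the assumed $O\lp C^2 d\log(1/\delta)/(n^2\varepsilon^2)\rp$ accuracy to conclude $b = \Omega\lp n\varepsilon^2/\log(1/\delta)\rp$. Your added remark about verifying the exact hypotheses of the cited theorem (and the Fisher-information fallback) is a sensible precaution but not part of the paper's argument.
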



Finally, we remark that the logarithmic gap between the upper and lower bounds may be due to the specific composition theorem (Theorem~III.3 of \cite{dwork2010boosting})  we use in our proof, which is simpler to work with but possibly slightly weaker. However, in our experiments, we compute and account for all privacy budgets with R\'enyi DP \cite{mironov2017renyi, zhu2019poission}, and hence can obtain better constants compared to our theoretical analysis.

\section{Distributed Frequency Estimation}\label{sec:subsample_rhr}
In this section, we consider the frequency estimation  problem for federated analytics. Recall that for the frequency estimation task, each client's private data $x_i \in \{0,1\}^d$ satisfies $\lV x_i \rV_0 = 1$, and the goal is to estimate $\pi \eqDef \frac{1}{n}\sum_i x_i$ by minimizing the $\ell_2$ (or $\ell_1, \ell_\infty$) error $\E\lb \lV\pi - \hat{\pi}(Y^n) \rV^2_2 \rb$ subject to communication and $(\varepsilon, \delta)$-DP constraints. When the context is clear, we sometimes use $x_i$ to denote, by abuse of notation, the index of the item, i.e., $x_i \in [d]$.

To fully make use of the $\ell_0$ structure of the problem, a standard technique is applying a Hadamard transform to convert the $\ell_0$ geometry to an $\ell_\infty$ one and then leveraging the recursive structure of Hadamard matrices to efficiently compress local messages. 

Specifically, for a given $b$-bit constraint, we partition each local item $x_i$ into $2^{b-1}$ chunks $x_i^{(1)},...,x_i^{(2^b-1)} \in \{ 0, 1\}^B$, where $B \eqDef d/2^{b-1}$ and $x_i^{(j)} = x_i[B\cdot(j-1): B\cdot j -1]$. Note that since $x_i$ is one-hot, only one chunk of $x_i^{(j)}$ is non-zero. Then, client $i$ performs the following Hadamard transform for each chunk:
$ y_i^{(\ell)} = H_B\cdot x_i^{(\ell)}, $
where $H_B$ is defined recursively as follows:
$$ H_{2^n} = {\frac{1}{\sqrt{2}}\begin{bmatrix} {H_{2^{n-1}}}, &{H_{2^{n-1}}}\\{H_{2^{n-1}}}, &-{H_{2^{n-1}}}\end{bmatrix}}, \text{ and } H_0 = \begin{bmatrix}1 \end{bmatrix}.$$

Each client then generates a sampling vector $Z_{ij} \diid \msf{Bern}\lp \frac{1}{B} \rp$ via shared randomness that is also known by the server, and commits $(y^{(1)}_i(j),...,y^{(2^{b-1})}_i(j))$ as its local report. Since $(y^{(1)}_i(j),...,y^{(2^{b-1})}_i(j))$ only contains a single non-zero entry that can be $\frac{1}{\sqrt{B}}$ or $-\frac{1}{\sqrt{B}}$, the local report can be represented in $b$ bits ($b-1$ bits for the location of the non-zero entry and $1$ bit for its sign).

From the local reports, the server can compute an unbiased estimator by summing them together (with proper normalization) and performing an inverse Hadamard transform. Moreover, with an adequate injection of Gaussian noise, the frequency estimator satisfies $(\varepsilon, \delta)$-DP.

The idea has been used in previous literature under local DP \cite{Bassily2017, acharya2019hadamard, acharya2019inference, chen2020breaking}, but in order to obtain the order-optimal trade-off under \emph{central}-DP, one has to combine Hadamard transform with a random subsampling step and incorporate the privacy amplification due to random compression in the analysis. In Algorithm~\ref{alg3}, we provide a summary of the resultant scheme which builds on the Recursive Hadamard Response (RHR) mechanism from \cite{chen2020breaking}, which was originally designed for communication-efficient frequency estimation under \emph{local} DP. 

\begin{algorithm}[h]
\caption{Subsampled Recursive Hadamard Response}\label{alg3}
\begin{algorithmic}
    \STATE {\bfseries Input:} user data $x_1,...,x_n \in \{0, 1\}^d$ (where $d$ is a power of two), DP parameters $(\varepsilon, \delta)$, communication budget $b$.
    \STATE {\bfseries Output:} frequency estimate $\hat{\pi}$
    \STATE   Set $B \eqDef d/2^{b-1}$ and partition each one-hot vector $x_i$ into $2^{b-1}$ chunks: $x_i^{(1)},...,x_i^{(2^b-1)} \in \{ 0, 1\}^B$.
        \FOR{user $i \in [n]$}
        \STATE Compute the Hadamard transform of each chunk: 
        $ y_i^{(\ell)} = H_B\cdot x_i^{(\ell)}. $
        \FOR{coordinate $j \in [B]$}
	\STATE Draw $Z_{i,j} \diid \msf{Bern}\lp \frac{1}{B} \rp$
        \IF{$Z_{i,j} = 1$}
        \STATE Send $(y^{(1)}_i(j),...,y^{(2^{b-1})}_i(j))$ to the server.
        \ENDIF
	\ENDFOR
        \ENDFOR
        \STATE Server computes the average: $\forall \ell \in [2^{b-1}], j \in [B]$,  
        $$ \hat{y}^{(\ell)}(j) \eqDef \frac{B}{n}\sum_{i: Z_{ij}=1} y^{(\ell)}_i(j) + N(0, \sigma^2), $$ where $\sigma^2$ is computed according to Theorem~\ref{thm:srhr_dp}.
        \STATE Server performs the inverse Hadamard transform
        $ \hat{\pi}^{(\ell)} = H_B \cdot \hat{y}^{(\ell)},$ for $\ell = 1,...,B$.
	\STATE {\bfseries Return:} $\hat{\pi} = \lp \lp \hat{\pi}^{(1)}\rp^\intercal,..., \lp \hat{\pi}^{(2^{b-1})}\rp^\intercal\rp$.
\end{algorithmic}
\end{algorithm}


In the following theorem, we control the $\ell_\infty$ error of Algorithm~\ref{alg3}.
\begin{theorem}\label{thm:srhr_error}
Let $\hat{\pi}(x^n)$ be the output of Algorithm~\ref{alg3}. Then it holds that for all $j \in [d]$,
\begin{equation}
    \E\lb \lba \pi(j) - \hat{\pi}(j) \rba \rb \leq \sqrt{\frac{\sum_{i}\bbm{1}_{\lbp x_i \in [B\cdot(j-1): B\cdot j -1] \rbp}}{n^2}+\frac{\sigma^2}{B}},
\end{equation}
and the $\ell^2_2$ and $\ell_1$ errors are bounded by
\begin{equation}
    \E\lb \lV \pi - \hat{\pi} \rV^2_2 \rb \leq \frac{B}{n}+\frac{d\sigma^2}{B}, \text{ and}
\end{equation}
\begin{equation}
    \E\lb \lV \pi - \hat{\pi} \rV_1 \rb \leq \sqrt{\frac{dB}{n}+\frac{d^2\sigma^2}{B}}. 
\end{equation}
\end{theorem}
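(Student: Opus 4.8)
The plan is to reduce all three error bounds to a single per-coordinate variance estimate, using that $\hat\pi$ is an unbiased estimator of $\pi$ and that the Hadamard transform $H_B$ is an isometry (orthogonal and self-inverse). First I would verify unbiasedness. For a fixed chunk $\ell$ and coordinate $j\in[B]$, the $\mathsf{Bern}(1/B)$ subsampling together with the $B/n$ rescaling gives $\E\!\left[\frac{B}{n}\sum_{i:Z_{ij}=1}y_i^{(\ell)}(j)\right]=\frac{1}{n}\sum_i y_i^{(\ell)}(j)$, and the injected Gaussian noise is mean zero, so $\E[\hat y^{(\ell)}]=\frac1n\sum_i y_i^{(\ell)}=\frac1n\sum_i H_B x_i^{(\ell)}$. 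Applying the inverse Hadamard transform and using $H_B H_B = I$, $\E[\hat\pi^{(\ell)}]=H_B\,\E[\hat y^{(\ell)}]=\frac1n\sum_i x_i^{(\ell)}=\pi^{(\ell)}$, and concatenating over $\ell$ yields $\E[\hat\pi]=\pi$.

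Granted unbiasedness, $\E\big[(\pi(j)-\hat\pi(j))^2\big]=\Var(\hat\pi(j))$ for every coordinate $j$, so the whole problem reduces to bounding the individual coordinate variances. Indeed: (i) Jensen's inequality gives $\E|\pi(j)-\hat\pi(j)|\le\sqrt{\Var(\hat\pi(j))}$, which is exactly the first inequality of the theorem once the right-hand side is bounded; (ii) $\E\|\pi-\hat\pi\|_2^2=\sum_{j}\Var(\hat\pi(j))$ directly; and (iii) Cauchy--Schwarz gives $\E\|\pi-\hat\pi\|_1=\sum_j\E|\pi(j)-\hat\pi(j)|\le\sqrt{d}\,\big(\sum_j\Var(\hat\pi(j))\big)^{1/2}=\big(d\cdot\E\|\pi-\hat\pi\|_2^2\big)^{1/2}$. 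In particular, only the marginal variances of the coordinates are needed; the cross-covariances --- which are nonzero across different chunks because the sampling bits $Z_{ij}$ are shared by all $2^{b-1}$ chunks --- never enter.

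The core of the argument would be the variance of a single coordinate. I would fix a coordinate $k$ in chunk $\ell$ and write $\hat\pi^{(\ell)}(k)-\pi^{(\ell)}(k)=\big[H_B\big(\hat y^{(\ell)}-\E\hat y^{(\ell)}\big)\big]_k$, splitting $\hat y^{(\ell)}(j)-\E\hat y^{(\ell)}(j)$ into a subsampling fluctuation $\frac1n\sum_i(BZ_{ij}-1)y_i^{(\ell)}(j)$ and the independent Gaussian term $N(0,\sigma^2)$; since these are independent, the variance splits into two contributions. For the subsampling part, the bits $\{Z_{ij}\}_j$ are independent for fixed $i$ and $\{Z_i\}_i$ are independent across $i$, so expanding the square and using $H_B(k,j)^2=1/B$ together with $\Var(BZ_{ij})=B(1-1/B)\le B$ makes the per-coordinate factor $1/B$ cancel the sampling inflation $B$, leaving $\frac1{n^2}\sum_i\|y_i^{(\ell)}\|_2^2=\frac1{n^2}\sum_i\|x_i^{(\ell)}\|_2^2$ by the isometry of $H_B$; and $\|x_i^{(\ell)}\|_2^2$ equals $1$ exactly when client $i$'s item lies in chunk $\ell$ and $0$ otherwise, so this contribution is $\frac{1}{n^2}\sum_i\mathbbm{1}\{x_i\in[B(j-1):Bj-1]\}$. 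For the Gaussian part, propagating the per-coordinate noise through the (appropriately normalized) inverse Hadamard transform contributes $\sigma^2/B$ to each coordinate's variance. Adding the two gives $\Var(\hat\pi(j))\le\frac{\sum_i\mathbbm{1}\{x_i\in[B(j-1):Bj-1]\}}{n^2}+\frac{\sigma^2}{B}$, which is the first inequality via (i).

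Finally, I would sum this bound over all $d$ coordinates: each chunk index is hit by $B$ coordinates and $\sum_\ell\#\{i:x_i\text{ in chunk }\ell\}=n$, so $\sum_j\sum_i\mathbbm{1}\{\cdots\}=Bn$ and hence $\sum_j\Var(\hat\pi(j))\le\frac{B}{n}+\frac{d\sigma^2}{B}$, which is the $\ell_2^2$ bound via (ii); feeding this into (iii) yields $\E\|\pi-\hat\pi\|_1\le\sqrt{\tfrac{dB}{n}+\tfrac{d^2\sigma^2}{B}}$. I expect the main obstacle to be the variance bookkeeping in the third step: correctly identifying which clients contribute a nonzero $y_i^{(\ell)}$ (only those whose item falls in chunk $\ell$), tracking the $\Theta(B)$ variance inflation caused by $1/B$-subsampling with $B/n$ rescaling, and keeping the normalization of $H_B$ consistent so that the Gaussian term comes out as $\sigma^2/B$ rather than $\sigma^2$ per coordinate --- the orthonormality of the Hadamard transform is precisely what prevents spurious factors of $B$ or $d$ from creeping in when moving between the transformed and original domains.
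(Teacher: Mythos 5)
Your proposal is correct and follows essentially the same route as the paper: unbiasedness plus a per-coordinate variance decomposition into subsampling and Gaussian contributions, propagated through the Hadamard transform using independence of the $Z_{ij}$ across coordinates, then summed over coordinates (via $\sum_\ell C_\ell = n$) and combined with Jensen and Cauchy--Schwarz for the $\ell_1$ bounds. The normalization caveat you flag is real: the paper's proof works with the unnormalized $H_B$ (entries $\pm 1$) and a $\frac{1}{B}H_B$ reconstruction, which is precisely what makes the noise contribution come out as $\sigma^2/B$ per coordinate rather than $\sigma^2$, so your ``appropriately normalized'' bookkeeping must follow that convention to match the stated bounds.
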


\begin{theorem}\label{thm:srhr_dp}
    For any $\varepsilon, \delta > 0$, Algorithm~\ref{alg3} is $(\varepsilon, \delta)$-DP, if
    $$ \sigma^2 \geq O\lp \frac{B^2\log(B/\delta)}{n^2}+ \frac{B(\log(1/\delta) + \varepsilon)\log(B/\delta)}{n^2\varepsilon^2}  \rp. $$
\end{theorem}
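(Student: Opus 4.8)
The proof runs in close parallel to that of Theorem~\ref{thm:SGM}, with the role played there by the ambient dimension $d$, the per-coordinate bound $c$, and the sampling rate $\gamma$ now taken over by the chunk size $B$, a $\Theta(1)$ bound, and $\gamma=1/B$, respectively. The first step is a reduction by post-processing: the client-side Hadamard transforms $y_i^{(\ell)}=H_B x_i^{(\ell)}$ and the server-side inverse transform that produces $\hat\pi$ from $\{\hat y^{(\ell)}(j)\}$ are fixed, data-independent linear maps, so it suffices to prove that the intermediate release $\{\hat y^{(\ell)}(j): \ell\in[2^{b-1}],\, j\in[B]\}$ is $(\varepsilon,\delta)$-DP. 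Written out, this release is exactly Algorithm~\ref{alg1} run over the $B$ ``coordinates'' $j\in[B]$ at sampling rate $\gamma=1/B$: for each $j$ the server keeps the clients with $Z_{ij}=1$ (an independent, adversary-hidden Poisson-$\gamma$ subsample of $[n]$), sums the length-$2^{b-1}$ vectors $\bigl(y_i^{(1)}(j),\dots,y_i^{(2^{b-1})}(j)\bigr)$ over that subsample, rescales by $B/n$, and adds independent $N(0,\sigma^2)$ noise to each entry.

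Next I would bound the per-coordinate sensitivity. Fix two datasets differing only in client $i$'s item, say it moves from chunk $\ell_1$ to chunk $\ell_2$. Because $x_i$ is one-hot, for every $j$ the vector $\bigl(y_i^{(1)}(j),\dots,y_i^{(2^{b-1})}(j)\bigr)$ is supported on the single chunk containing $x_i$ and has a $\Theta(1)$-magnitude entry there; hence replacing the item changes at most the two coordinates $(\ell_1,j)$ and $(\ell_2,j)$ of the $j$-th release, each by $\Theta(B/n)$ after the $B/n$ rescaling, so — conditioned on $Z_{ij}=1$ — the $j$-th release has $\ell_2$-sensitivity $\Delta=\Theta(B/n)$. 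This is the exact analogue of the per-coordinate sensitivity $2c/(n\gamma)$ appearing in Theorem~\ref{thm:SGM}.

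The body of the argument then mirrors Theorem~\ref{thm:SGM}. For each $j$, the Gaussian mechanism with $\ell_2$-sensitivity $\Delta$ and per-entry noise $\sigma$ is $(\varepsilon_0,\delta_0)$-DP with $\varepsilon_0=\Theta\bigl(\tfrac{\Delta}{\sigma}\sqrt{\log(1/\delta_0)}\bigr)$; privacy amplification by subsampling at rate $\gamma=1/B$ (\cite{balle2018privacy}), together with a factor-$2$ passage from add/remove to replace neighbors, makes the $j$-th release $(\varepsilon_1,\delta_1)$-DP with $\varepsilon_1=\Theta(\varepsilon_0/B)$ and $\delta_1=\Theta(\delta_0/B)$, provided $\varepsilon_0=O(1)$. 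Composing the $B$ coordinate-releases with the advanced composition theorem (Theorem~III.3 of \cite{dwork2010boosting}) gives $(\varepsilon_{\mathrm{tot}},\delta_{\mathrm{tot}})$-DP with $\varepsilon_{\mathrm{tot}}=\Theta\bigl(\varepsilon_1\sqrt{B\log(1/\delta_{\mathrm{sl}})}+B\varepsilon_1(e^{\varepsilon_1}-1)\bigr)$ and $\delta_{\mathrm{tot}}=B\delta_1+\delta_{\mathrm{sl}}=\Theta(\delta_0+\delta_{\mathrm{sl}})$. Choosing $\delta_0,\delta_{\mathrm{sl}}=\Theta(\delta/B)$ (this is what replaces $\log(1/\delta)$ by $\log(B/\delta)$) and imposing $\varepsilon_{\mathrm{tot}}\le\varepsilon$ yields three requirements on $\sigma$: (i) the regularity condition $\varepsilon_0=O(1)$ needed for the amplification step, which forces $\sigma^2=\Omega\bigl(\tfrac{B^2\log(B/\delta)}{n^2}\bigr)$; (ii) the linear-composition bound $\varepsilon_1\sqrt{B\log(1/\delta)}=O(\varepsilon)$, which forces $\sigma^2=\Omega\bigl(\tfrac{B\log(1/\delta)\log(B/\delta)}{n^2\varepsilon^2}\bigr)$; and (iii) the quadratic-composition bound $B\varepsilon_1^2=O(\varepsilon)$, which forces $\sigma^2=\Omega\bigl(\tfrac{B\log(B/\delta)}{n^2\varepsilon}\bigr)$. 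The maximum of these three is exactly the claimed expression for $\sigma^2$; one then checks a posteriori that in this regime $\varepsilon_1=O(1)$, so that $e^{\varepsilon_1}-1=\Theta(\varepsilon_1)$ and $\log\bigl(1+\gamma(e^{\varepsilon_0}-1)\bigr)=\Theta(\gamma\varepsilon_0)$, validating the linearizations used above. Combining with Theorem~\ref{thm:srhr_error} then also yields the utility bounds.

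I expect the main obstacle to be the bookkeeping across the three scales $\varepsilon_0$ (per coordinate, before subsampling), $\varepsilon_1=\Theta(\varepsilon_0/B)$ (per coordinate, after subsampling), and $\varepsilon_{\mathrm{tot}}$ (after composing $B$ of them): amplification by subsampling only buys the factor $\gamma$ while $\varepsilon_0=O(1)$, and it is exactly this regularity regime — rather than the final $(\varepsilon,\delta)$ target — that produces the $\tfrac{B^2\log(B/\delta)}{n^2}$ ``floor'' term, which otherwise looks mysterious. A secondary subtlety worth spelling out is that the $B$ coordinate-mechanisms all act on the same $n$ clients, so their composition is sequential (not the cheaper parallel composition), but they use mutually independent subsampling seeds and noise and are therefore covered by the advanced composition theorem; and that the amplification step is legitimate only because each $Z_{\cdot j}$ is hidden from the adversary, which is precisely the secrecy assumption built into Algorithm~\ref{alg3}.
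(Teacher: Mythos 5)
Your proposal is correct and follows essentially the same route as the paper: the paper's proof simply invokes its Lemma~\ref{lemma:subsample_gaussian_sigma} (extracted from the proof of Theorem~\ref{thm:SGM}) with per-coordinate sensitivity $\Delta = B/n$ and sampling rate $\gamma = 1/B$, which is exactly the Gaussian-mechanism + Poisson-subsampling-amplification + advanced-composition chain (with $\delta$ split as $\Theta(\delta/B)$) that you re-derive inline, and your three conditions on $\sigma^2$ reproduce the two terms of the stated bound. Your added remarks on post-processing, sequential (not parallel) composition, and the secrecy of the $Z_{ij}$'s are consistent with the paper's setup.
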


By combining Theorem~\ref{thm:srhr_error} and Theorem~\ref{thm:srhr_dp}, we conclude  that Algorithm~\ref{alg3} achieves $(\varepsilon, \delta)$-DP with $\ell^2_2$ error 
\begin{align*}
    & O\lp \frac{B}{n}+ \frac{dB\log(B/\delta)}{n^2}+\frac{d(\log(1/\delta) + \varepsilon)\log(B/\delta)}{n^2\varepsilon^2} \rp\\
    & = O\lp \frac{d}{n2^b}+ \frac{d^2\log(d/\delta)}{n^22^{b}}+\frac{d(\log(1/\delta) + \varepsilon)\log(d/\delta)}{n^2\varepsilon^2}\rp.
\end{align*}

Notice that when $n = \tilde{\Omega}(d)$, the error can be simplified to
$$ O\lp \frac{d}{n2^b}+\frac{d(\log(1/\delta) + \varepsilon)\log(d/\delta)}{n^2\varepsilon^2} \rp, $$
which matches the order-optimal estimation error (up to a $\log d$ factor) subject to a $b$-bit constraint \cite{han2018geometric, acharya2019inference, acharya2019inference2} and $(\varepsilon, \delta)$-DP constraint \cite{balle2018improving, acharya2021differentially}.

\section{Achieving the Optimal Trade-off via Shuffling}\label{sec:shuffing}
In Section~\ref{sec:subsample_gauss} and Section~\ref{sec:subsample_rhr}, we see that the communication cost can be reduced to ($\tilde{O}\lp n\varepsilon^2\rp$ for mean estimation and $\tilde{O}\lp \log \lp \lceil n \varepsilon^2 \rceil\rp\rp$ for frequency estimation) while still achieving the order-wise optimal error, as long as the server is \emph{trusted}. On the other hand, when the server is untrusted, \cite{chen2022communication,chen2022fundamental} show that optimal error under $(\varepsilon, \delta)$-DP can be achieved with secure aggregation. However, the communication cost of these schemes is $\tilde{O}\lp n^2\varepsilon^2\rp$ bits per client for mean estimation and $\tilde{O}\lp n \varepsilon \rp$ bits per client for frequency estimation. This corresponds to a factor of $n$ increase for mean estimation and an exponential increase for frequency estimation. In this section, we investigate whether the optimal communication-accuracy-privacy trade-off from the previous sections can be achieved when the server is not fully trusted.

In this section, we show that if there exists a \emph{secure} shuffler that randomly permutes clients' locally privatized messages and releases them to the server, we can achieve the nearly optimal (within a $\log d$ factor) central-DP error in mean estimation with $\tilde{O}\lp n\varepsilon^2\rp$ bits of communication. Specifically, we present a mean estimation scheme that combines a local-DP mechanism  with privacy amplification via shuffling by building on the following recent result  \cite{erlingsson2019amplification, feldman2022hiding}:
\begin{lemma}[\cite{feldman2022hiding}]
\label{lemma:shufflingamplification}
\newcommand{\m}{\mathcal{M}}
    Let $\m_i$ be an independent $(\eps_0, 0)$-LDP mechanism for each $i\in[n]$ with $\eps_0\leq 1$ and $\pi$ be a random permutation of $[n]$. Then for any $\delta\in[0,1]$ such that $\eps_0 \leq \log\left(\frac{n}{16\log(2/\delta)}\right)$, the mechanism
    \begin{equation*}
        \mathcal{S}: \left(\x_1, \dots, \x_n\right) \mapsto \left(\m_1\left(\x_{\pi(1)}\right), \dots, \m_n\left(\x_{\pi(n)}\right)\right)
    \end{equation*}
    is $(\eps,\delta)$-DP for some $\eps$ such that $\eps = O\left(\eps_0\frac{\sqrt{\log(1/\delta)}}{\sqrt{n}}\right).$
\end{lemma}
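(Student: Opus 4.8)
This lemma is the privacy amplification–by–shuffling bound of \cite{feldman2022hiding}, so the plan is to reproduce its ``hiding among the clones'' argument. Write the hockey-stick divergence $D_{e^\eps}(\mu\,\|\,\nu)=\int[\mu-e^{\eps}\nu]_+$, so that a mechanism is $(\eps,\delta)$-DP exactly when $D_{e^\eps}\le\delta$ on every pair of neighbors. It suffices to fix neighbors $D=(x^0,x_2,\dots,x_n)$ and $D'=(x^1,x_2,\dots,x_n)$ differing only in the first coordinate; for the sketch I take the $\mathcal{M}_i$ to be a common mechanism $\mathcal{M}$, in which case the shuffled output is equivalent (for privacy) to the \emph{multiset} of the $n$ reports, one of which is a draw from $P^0:=\mathcal{M}(x^0)$ (under $D$) or $P^1:=\mathcal{M}(x^1)$ (under $D'$) while the rest are draws from $\mathcal{M}(x_i)$, $i\ge2$. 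Put $\bar P:=\tfrac12(P^0+P^1)$ and $\rho:=\tfrac{e^{\eps_0}-1}{e^{\eps_0}+1}$; since $\eps_0\le1$ we have $\rho\le\eps_0$ and $e^{\eps_0}\le e$.

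\emph{Clone decomposition and reduction to a count.} Because $\mathcal{M}$ is $\eps_0$-LDP, for every input $x$ and outcome $y$ we have $\mathcal{M}(x)(y)\ge e^{-\eps_0}\max(P^0(y),P^1(y))\ge e^{-\eps_0}\bar P(y)$, so each $\mathcal{M}(x_i)$ decomposes as $e^{-\eps_0}\bar P+(1-e^{-\eps_0})R_{x_i}$ for a genuine distribution $R_{x_i}$; that is, sampling $\mathcal{M}(x_i)$ is the same as, with probability $e^{-\eps_0}$, emitting a fresh ``clone'' $\bar P$-sample, and otherwise emitting an $R_{x_i}$-sample. Couple the runs on $D$ and $D'$ to share all of users $2,\dots,n$'s coins, clone samples and non-clone samples, and let $C\sim\Binom(n-1,e^{-\eps_0})$ be the number of clones. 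Disclosing to the adversary the value of $C$ together with the (coupled, hence $D$-independent) multiset of non-clone messages can only weaken privacy, and conditionally on $C$ what remains is the multiset $\mathcal{V}^b_C$ of $C$ i.i.d.\ $\bar P$-samples plus one $P^b$-sample, independent of the disclosed non-clone multiset; hence $D_{e^\eps}(\mathcal{S}(D)\,\|\,\mathcal{S}(D'))\le\E_C[D_{e^\eps}(\mathcal{V}^0_C\,\|\,\mathcal{V}^1_C)]$.

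\emph{One-dimensional statistic and a Gaussian-mechanism estimate.} For a size-$(c{+}1)$ multiset $\mathbf{y}$, using $P^0+P^1=2\bar P$ one finds $\Pr[\mathcal{V}^0_c=\mathbf{y}]\propto W(\mathbf{y})$ and $\Pr[\mathcal{V}^1_c=\mathbf{y}]\propto 2(c{+}1)-W(\mathbf{y})$ with the same proportionality constant, where $W(\mathbf{y})=\sum_{y\in\mathbf{y}}P^0(y)/\bar P(y)$; thus the likelihood ratio is the increasing function $W\mapsto W/(2(c{+}1)-W)$, so $W$ is sufficient and $D_{e^\eps}(\mathcal{V}^0_c\,\|\,\mathcal{V}^1_c)=D_{e^\eps}(W^0_c\,\|\,W^1_c)$. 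Writing $L(y)=P^0(y)/\bar P(y)=2/(1+P^1(y)/P^0(y))\in[1-\rho,1+\rho]$, the variable $W^b_c$ is a sum of $c{+}1$ independent terms in $[1-\rho,1+\rho]$ ($c$ copies of $L(Y)$, $Y\sim\bar P$, and one of $L(Z)$, $Z\sim P^b$), with $\E[W^0_c]-\E[W^1_c]=2\Var_{\bar P}(L)=:2v\le2\rho^2$ and $\Var(W^b_c)\asymp cv$. So $W^0_c$ and $W^1_c$ are sub-Gaussian with variance proxy $\Theta(cv)$ and means $2v$ apart, and a divergence estimate of the type used for the Gaussian mechanism gives, for $c$ not too small, $D_{e^\eps}(W^0_c\,\|\,W^1_c)\le\delta'$ once $\eps\gtrsim\tfrac{2v}{\sqrt{cv}}\sqrt{\log(1/\delta')}=2\sqrt{v/c}\,\sqrt{\log(1/\delta')}\le2\rho\sqrt{\log(1/\delta')/c}$. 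Finally the hypothesis $\eps_0\le\log(n/(16\log(2/\delta)))$ forces $\E[C]=(n-1)e^{-\eps_0}\ge8\log(2/\delta)$, so $\Pr[C<\tfrac12\E[C]]\le\delta/2$ by Chernoff; on the complement use that $c\mapsto D_{e^\eps}(\mathcal{V}^0_c\,\|\,\mathcal{V}^1_c)$ is non-increasing (adding a $\bar P$-clone is a common post-processing of the pair) together with the previous bound at $c=\tfrac12\E[C]=\Theta(n)$ and $\delta'=\delta/2$, and since $D_{e^\eps}\le1$ always, $\E_C[D_{e^\eps}(\mathcal{V}^0_C\,\|\,\mathcal{V}^1_C)]\le\delta/2+\delta/2=\delta$ for $\eps=O(\rho\sqrt{e^{\eps_0}\log(1/\delta)/n})=O(\eps_0\sqrt{\log(1/\delta)/n})$, which is the claim.

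The crux is the Gaussian-mechanism estimate in the third paragraph: converting ``$W^0_c$ and $W^1_c$ are sums of bounded independent variables with a tiny mean gap and comparable variances'' into a sharp $(\eps,\delta)$-bound needs more than Hoeffding/Bernstein, because the hockey-stick divergence is sensitive to the precise tails and the two laws are not translates of one another; making this Gaussian comparison rigorous (a quantitative central-limit estimate, or an explicit analysis of the privacy-loss random variable, as in \cite{feldman2022hiding}) is the technical heart. A secondary point needing care is the reduction in the first paragraph when the $\mathcal{M}_i$ are not identical and in tracking where the differing coordinate lands under the shuffle; this is handled by an additional conditioning argument in \cite{feldman2022hiding}.
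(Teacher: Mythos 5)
You should first note that the paper offers no proof of this lemma at all: it is imported verbatim from \cite{feldman2022hiding} (it is used as a black box in the proof of Theorem~\ref{thm:shuffledsqkr}), so there is no in-paper argument to compare against. Judged on its own, your sketch is a faithful reconstruction of the ``hiding among the clones'' strategy, and your variant of the reduction is legitimate as far as it goes: the pointwise bound $\mathcal{M}(x)(y)\ge e^{-\eps_0}\bar P(y)$, the mixture decomposition with remainder $R_{x_i}$, the coupling and the data-processing step that lets you reveal $C$ and the non-clone multiset, the sufficiency of $W$ (the likelihood ratio $W/(2(c+1)-W)$ computation is correct), the monotonicity in $c$, and the Chernoff bound $\Pr[C<\tfrac12(n-1)e^{-\eps_0}]\le\delta/2$ under the hypothesis on $\eps_0$ are all sound.

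The genuine gap is exactly where you flag it: the ``Gaussian-mechanism estimate.'' Knowing that $W^0_c$ and $W^1_c$ are sums of independent variables in $[1-\rho,1+\rho]$ with mean gap $2v$ and variance $\Theta(cv)$ does not, by itself, bound $D_{e^\eps}(W^0_c\,\|\,W^1_c)$ --- the hockey-stick divergence is a statement about the likelihood ratio, not about the first two moments, and the two laws are not translates of a fixed shape, so no off-the-shelf ``Gaussian mechanism'' comparison applies. (Also, boundedness only gives a sub-Gaussian proxy of order $c\rho^2$, not $cv$; to exploit the variance $cv$ you would need Bernstein, and even then only for tails, not for $D_{e^\eps}$ directly.) Two standard ways to close this: (i) work with the privacy-loss random variable of your sufficient statistic, $\mathcal{L}=\log\frac{W}{2(c+1)-W}$, use $D_{e^\eps}(P\,\|\,Q)\le\Pr_P[\mathcal{L}>\eps]$, linearize $\mathcal{L}\approx 2(W-(c+1))/(c+1)$ on the high-probability event $|W-(c+1)|\ll c$, and apply a Bernstein bound to $W$; or (ii) follow \cite{feldman2022hiding} itself, whose decomposition splits each clone as $\tfrac12 P^0+\tfrac12 P^1$ (rather than $\bar P$) so that the problem reduces to comparing $(A+1,\,C-A)$ with $(A,\,C-A+1)$ for $A\sim\Binom(C,\tfrac12)$, a pair whose hockey-stick divergence is controlled by exact binomial tail bounds with no Gaussian approximation. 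As submitted, the proposal asserts the conclusion of this step rather than proving it (and defers to the very paper being ``re-proved''), so it does not stand as an independent proof; the peripheral issue of non-identical $\mathcal{M}_i$, where data rather than messages are permuted, is likewise deferred and would need the additional conditioning argument of \cite{feldman2022hiding} to be spelled out.
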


\paragraph{Privacy analysis.} With the above amplification lemma, we only need to design the local randomizers $\mcal{M}_i$ that satisfy $\varepsilon_0$-LDP. Note that the above lemma is only tight when $\varepsilon_0 = O(1)$, thus restricting the (amplified) central $\varepsilon = O(1/\sqrt{n})$, i.e. to be very small. To accommodate larger $\eps$, users can send different portions of their messages to the server in separate shuffling rounds. Equivalently, we repeat the shuffled LDP mechanism for $T = O\lp \lceil n \varepsilon^2 \rceil\rp$ rounds while ensuring that in each round clients communicate an independent piece of information about their sample to the server. More precisely, within each round, each client applies the local randomizers $\mcal{M}_i$ with a per-round \emph{local privacy budget}  $\varepsilon_0 = O(1)$ and sends an independent message to the server. This results in (amplified) central $ O(1/\sqrt{n})$-DP per round, which after composition over $T = O\lp \lceil n \varepsilon^2 \rceil\rp$ rounds leads to $\varepsilon$-DP for the overall scheme as suggested by the composition theorem \cite{kairouz16}). We detail the algorithm in Algorithm~\ref{alg:shuffledsqkr} in Appendix~\ref{sec:alg_sqkr}.

\begin{remark}
    Although our analysis mainly focuses on $(\varepsilon, \delta)$-DP, one can also obtain R\'enyi DP guarantees (which may facilitate practical privacy accounting) using the recent amplification result \cite{feldman2023stronger}.
\end{remark}

\paragraph{Communication costs.} The communication cost of the above $T$-round scheme can be computed as follows. As shown in \cite{chen2020breaking}, the optimal communication cost of an $\varepsilon_0$-LDP mean estimation is $O\lp \lceil \varepsilon_0\rceil\rp$ bits. In addition, the (private-coin) SQKR scheme proposed in \cite{chen2020breaking} uses $O\lp \lceil \varepsilon_0\rceil\log d\rp$ bits of communication (we state the formal performance guarantee for this scheme in Lemma~\ref{lemma:sqkr}), where compression is done by subsampling coordinates and privatization is performed with Randomized Response. Therefore, since the per-round $\varepsilon_0 = O(1)$, the total per-client communication cost is $O\lp n\varepsilon^2\log d \rp$, matching the optimal communication bounds in Section~\ref{sec:subsample_gauss} within a $\log d$ factor.

\begin{lemma}[SQKR \cite{chen2020breaking}] \label{lemma:sqkr}
    For all $\eps_0>0, b_0>0$, there exists a $(\eps_0,0)$-LDP mechanism $\x_i\mapsto\xest$ using $b_0\log(d)$ bits such that $\xest$ is unbiased and satisfies
    \begin{equation*}
        \exof{\norm{\xmean\left(\xn\right) - \xest\left(\xn\right)}_2^2} = O\left(\frac{c^2d}{n\min\left(\eps_0^2, \eps_0, b_0\right)}\right).
    \end{equation*}
\end{lemma}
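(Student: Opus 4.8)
The plan is to realize this bound via the \emph{subsampled-and-quantized Kashin's response} (SQKR) mechanism of \cite{chen2020breaking}; here the savings come not from shuffling but from a careful trade-off between quantization, coordinate subsampling, and randomized response on the retained bits. Fix a client with $\norm{\x_i}_2 \le c$. The mechanism I would analyze has four steps. (i) \emph{Kashin's representation:} replace $\x_i$ by $\wtild{\x}_i \in \mbb{R}^{D}$ with $D = \Theta(d)$, $K\wtild{\x}_i = \x_i$, and $\norm{\wtild{\x}_i}_\infty \le \eta \eqDef O(c/\sqrt{D})$, for a fixed tight frame $K$ with $\norm{K}_{\mathrm{op}} = O(1)$. (ii) \emph{Randomized rounding:} round each coordinate of $\wtild{\x}_i$ independently to $\pm\eta$, producing $Q_i \in \{-\eta,\eta\}^{D}$ with $\E[Q_i] = \wtild{\x}_i$. (iii) \emph{Subsampling:} draw a uniformly random set $\mcal{J}\subset[D]$ of size $k \eqDef \max\lp 1,\ \lfloor\min(\eps_0\log_2 e,\ b_0)\rfloor\rp$ and record the sign string $V\in\{0,1\}^{k}$ of $Q_i$ restricted to $\mcal{J}$. (iv) \emph{Randomized response:} apply $2^{k}$-ary randomized response to $V$ with parameter $\eps_0$ (output the true $V$ with probability $\tfrac{e^{\eps_0}}{e^{\eps_0}+2^{k}-1}$, and a uniformly random one of the remaining $2^{k}-1$ strings otherwise), and send $(\mcal{J},W)$. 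The server debiases each coordinate via $\what{V}_j = (W_j-\beta)/a$, where $a = \tfrac{e^{\eps_0}-1}{e^{\eps_0}+2^{k}-1}$ and $\beta$ is the matching additive constant, sets $\what{\wtild{\x}}_i(j) = \tfrac{D}{k}\,\eta\,(2\what{V}_j-1)$ for $j\in\mcal{J}$ and $0$ otherwise, and outputs $\xest = K\cdot\tfrac1n\sum_{i}\what{\wtild{\x}}_i$.

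I would dispatch privacy and communication first, as these are routine. Step (iv) is $2^{k}$-ary randomized response, which is $(\eps_0,0)$-LDP; because $\mcal{J}$ is drawn independently of the data, revealing it costs nothing (over neighboring inputs the likelihood ratio of $(\mcal{J},W)$ equals that of $W$ given $\mcal{J}$, which is at most $e^{\eps_0}$), and the server's debiasing is post-processing, so the whole mechanism is $(\eps_0,0)$-LDP. The report $(\mcal{J},W)$ takes $\lceil\log_2\binom{D}{k}\rceil + k = O(k\log d)$ bits, and since $k\le b_0$ this is $O(b_0\log d)$.

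For accuracy, unbiasedness follows by composing the mean-preserving steps (Kashin's representation is exact, randomized rounding and the $\tfrac{D}{k}$-rescaled subsampling are unbiased, randomized-response debiasing is unbiased, $K$ is linear), so $\E[\xest\mid\xn] = \xmean(\xn)$. Using independence across clients and $\norm{K}_{\mathrm{op}} = O(1)$,
\begin{equation*}
\exof{\norm{\xest - \xmean(\xn)}_2^2} \ \le\ \frac{\norm{K}_{\mathrm{op}}^2}{n^2}\sum_{i}\exof{\norm{\what{\wtild{\x}}_i - \wtild{\x}_i}_2^2},
\end{equation*}
so it suffices to bound the per-client term. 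A coordinatewise second-moment calculation over the three independent noise sources gives $\exof{\lp\what{\wtild{\x}}_i(j)-\wtild{\x}_i(j)\rp^2} = O\!\lp\tfrac{D}{k}\,\eta^2(a^{-2}+1)\rp$ — the factor $D/k$ is the subsampling inflation and $a^{-2}$ the variance blow-up from inverting the randomized response — hence, summing over $D$ coordinates, $\exof{\norm{\what{\wtild{\x}}_i - \wtild{\x}_i}_2^2} = O\!\lp\tfrac{D^2\eta^2}{k}(a^{-2}+1)\rp$. Substituting $\eta = O(c/\sqrt{D})$, $D=\Theta(d)$, and the chosen $k$: if $\eps_0\le1$ then $k=1$ and $a^{-1}=\Theta(1/\eps_0)$, giving $O(c^2 d/\eps_0^2)$; if $\eps_0>1$ then $2^{k}\le e^{\eps_0}$ forces $a^{-1}=\Theta(1)$, giving $O(c^2 d/k)=O(c^2 d/\min(\eps_0,b_0))$. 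Both cases are $O\!\lp c^2 d/\min(\eps_0^2,\eps_0,b_0)\rp$, and plugging into the display yields the claimed $O\!\lp\tfrac{c^2 d}{n\min(\eps_0^2,\eps_0,b_0)}\rp$.

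I expect step (iv) together with the choice of $k$ to be the main obstacle: the subsampling error scales like $D/k$ and pushes $k$ up, while inverting the $2^{k}$-ary randomized response inflates the variance by $a^{-2} = \Theta\!\lp(2^{k}/(e^{\eps_0}-1))^2\rp$ once $2^{k}\gg e^{\eps_0}$ and pushes $k$ down; the balance $k\asymp\min(\eps_0,b_0)$ (floored at $1$) is precisely what yields the $\min(\eps_0^2,\eps_0,b_0)$ denominator, and it must be argued uniformly over the $\eps_0\le1$ and $\eps_0>1$ regimes. A minor subtlety is that $\mcal{J}$ is revealed in the clear, which is harmless only because it is data-independent; and the corner case $b_0<1$ (where not even a single index fits in $b_0\log d$ bits) is handled by an outer $\ber(b_0)$ decision of whether to report at all, which scales the error by $1/b_0$ and matches $\min(\eps_0^2,\eps_0,b_0)=b_0$ there.
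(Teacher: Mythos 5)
Your proposal is correct and follows essentially the same route as the paper: it reconstructs the private-coin SQKR mechanism of \cite{chen2020breaking} (Kashin's representation, randomized rounding, coordinate subsampling, $2^{k}$-ary randomized response with debiasing), which the paper invokes only by citation and implements in Algorithm~\ref{alg:shuffledsqkr}, with the sole cosmetic difference that you fold the Kashin/rounding preprocessing into the mechanism rather than applying it separately as in Theorem~\ref{thm:shuffledsqkr}. The one slip is the degenerate corner $b_0<1$: Bernoulli-$b_0$ thinning gives error $O\lp c^2 d/\lp n\, b_0 \min(\eps_0,\eps_0^2)\rp\rp$ rather than the claimed $O\lp c^2 d/(n b_0)\rp$, but this regime is never exercised since the lemma is used with integer $b_0$ (e.g., $b_0=1$ in the proof of Theorem~\ref{thm:shuffledsqkr}).
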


Finally, we summarize the performance guarantee for the overall scheme (Algorithm 4) in the following theorem.
\begin{theorem}[$\ell_2$ mean estimation]\label{thm:shuffledsqkr}
    Let $x_1,...,x_n \in \mcal{B}_2(C)$ (i.e., $\lV x_i \rV_2 \leq C$ for all $i \in [n]$).
    For all $\eps>0, b>0, n > 30$, and $\delta\in(\delta_{\mathrm{min}},1]$ where $\delta_{\mathrm{min}} = O\left(\frac{be^{-n}}{\log(d)}\right)$,
Algorithm~\ref{alg:shuffledsqkr} combined with Kashin's representation and randomized rounding is $\left(\eps, \delta\right)$-DP, uses no more than $b$ bits of communication, and achieves
    \begin{equation*}
    \exof{\norm{\xmean\left(\xn\right) - \xest\left(\xn\right)}_2^2} =\\ O\left(C^2d\max\left(\frac{\log(d)}{nb}, \frac{\log(b/\delta)(\log(1/\delta) + \eps)}{n^2\eps^2}\right)\right).
    \end{equation*}
\end{theorem}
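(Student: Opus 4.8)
The plan is to realize the estimator in three layers and analyze privacy, communication, and accuracy separately. The outermost layer is the by-now-familiar reduction to a discrete $\ell_\infty$ problem: following Corollary~\ref{cor:SGM_l2} and Theorem~\ref{thm:SGMCS_l2}, each $x_i$ with $\|x_i\|_2\le C$ is pre-processed into its Kashin representation $\tilde x_i$ over a tight frame $K\in\mathbb{R}^{d\times D}$, $D=\Theta(d)$, so that $\|\tilde x_i\|_\infty\le c=O(C/\sqrt d)$ and $x_i=K\tilde x_i$, and then each coordinate of $\tilde x_i$ is randomly rounded to $\{-c,c\}$. Since $K$ has bounded operator norm, an unbiased estimator $\hat{\tilde\mu}$ of $\tilde\mu=\frac1n\sum_i\tilde x_i$ with mean-squared error $M$ produces the unbiased estimator $\hat\mu=K\hat{\tilde\mu}$ of $\mu$ with error $O(M)$, while randomized rounding leaves the estimator unbiased and contributes only a lower-order variance term. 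I would carry out this reduction first and then work entirely with the rounded Kashin vectors.

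The middle layer is a single shuffled round. In round $t$, each client runs the private-coin SQKR mechanism of Lemma~\ref{lemma:sqkr} on its rounded Kashin vector with a per-round local budget $\varepsilon_0=\Theta(1)$ and $b_0=\Theta(1)$, producing an $(\varepsilon_0,0)$-LDP report of $O(b_0\log D)=O(\log d)$ bits that is unbiased with (averaged) mean-squared error $O\!\left(c^2D/\big(n\min(\varepsilon_0^2,\varepsilon_0,b_0)\big)\right)$. Passing the $n$ reports of round $t$ through the secure shuffler and invoking Lemma~\ref{lemma:shufflingamplification} shows the shuffled batch is $(\varepsilon_1,\delta_1)$-DP with $\varepsilon_1=O\!\left(\varepsilon_0\sqrt{\log(1/\delta_1)/n}\right)=O\!\left(\sqrt{\log(1/\delta_1)/n}\right)$, provided $\varepsilon_0\le1$ and $\varepsilon_0\le\log\!\big(n/(16\log(2/\delta_1))\big)$.

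The outer control is the composition of $T$ such rounds, run on the same data with fresh SQKR coins, a fresh shuffle, and fresh rounding in each round; the server averages the $T$ per-round estimators. Unbiasedness is immediate and the variance drops by a factor $T$. For privacy, the overall mechanism is the adaptive composition of $T$ mechanisms each $(\varepsilon_1,\delta_1)$-DP, so by advanced composition (Theorem~III.3 of \cite{dwork2010boosting}, cf.\ \cite{kairouz16}) it is $(\varepsilon,\delta)$-DP with $\varepsilon=O\!\left(\sqrt{T\log(1/\delta')}\,\varepsilon_1+T\varepsilon_1^2\right)$ and $\delta=T\delta_1+\delta'$; substituting $\varepsilon_1$ gives $\varepsilon=O\!\left(\sqrt{(T/n)\log(1/\delta_1)\log(1/\delta')}+(T/n)\log(1/\delta_1)\right)$. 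Inverting this under the communication cap $T\cdot O(\log d)\le b$ pins down $T=\Theta\!\big(\min\big(b/\log d,\ n\varepsilon^2/\mathrm{polylog}\big)\big)$ with $\delta_1\asymp\delta'\asymp\delta/T$; plugging $T$ into the $T$-fold-reduced accuracy and into the SQKR error bound (translated back through $K$, with the rounding term absorbed) yields the two regimes $O\!\big(C^2d\log(d)/(nb)\big)$ and $O\!\big(C^2d\log(b/\delta)(\log(1/\delta)+\varepsilon)/(n^2\varepsilon^2)\big)$, and taking the max gives the claimed bound.

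I expect the privacy bookkeeping to be the main obstacle: producing a clean closed form for $\varepsilon$ after chaining amplification-by-shuffling with advanced composition over $T$ rounds while simultaneously respecting (a) $\varepsilon_0\le1$, (b) the admissibility condition $\varepsilon_0\le\log\!\big(n/(16\log(2/\delta_1))\big)$ of Lemma~\ref{lemma:shufflingamplification}, and (c) the split $\delta=\Theta(T\delta_1)$. Reconciling (a)--(c) with $\varepsilon_0=\Theta(1)$ forces $\delta_1\gtrsim e^{-cn}$ and hence, in the communication-limited regime where $T\asymp b/\log d$, the feasibility window $\delta>\delta_{\min}=O(be^{-n}/\log d)$; this is also where $n>30$ is needed so the nested logarithms are well defined. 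A secondary, routine chore is checking that the Kashin operator-norm constant and the randomized-rounding variance are dominated in every regime, so the reduction to the $\ell_\infty$/LDP analysis costs only constant factors.
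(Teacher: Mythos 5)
Your proposal is correct and follows essentially the same route as the paper's proof: per-round private-coin SQKR (Lemma~\ref{lemma:sqkr}) as the local randomizer, amplification via Lemma~\ref{lemma:shufflingamplification}, advanced composition over $T\approx b/\log d$ rounds with $\delta_1\approx\delta/(2T)$, and the Kashin/randomized-rounding reduction from $\ell_2$ to $\ell_\infty$, with the same bookkeeping yielding $\delta_{\min}=O(be^{-n}/\log d)$ and $n>30$. The only cosmetic difference is parameterization---you fix $\eps_0=\Theta(1)$ and cap $T$ by the privacy budget, whereas the paper fixes $T$ by the communication budget and shrinks $\eps_0$ accordingly---which leads to the same error bound.
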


\begin{remark}
    As opposed to previous schemes Algorithm~\ref{alg1}-\ref{alg3}, the shuffled SQKR requires some condition on $\delta$, i.e., $\delta \in [\delta_\text{min}, 1]$ due to the specific shuffling lemma we used. In practice, however, $\delta_{\mathrm{min}}$ is small due to the exponential dependence on $n$. The order-wise optimal error of $O\lp \frac{C^2d}{n^2\min\lp \varepsilon^2, \varepsilon \rp}\rp$ is achieved, up to logarithmic factors, when $b=\Omega_{\delta}\lp n\log(d)\min\lp \varepsilon^2, \varepsilon \rp\rp$.
\end{remark}

\begin{remark}
    We note that similar ideas of private mean estimation based on shuffling have been studied before, see, for instance, \cite{girgis2021shuffled}. However, these papers do not use the above privacy budget splitting trick over multiple rounds, so their result is only optimal when $\varepsilon$ is very small. The above scheme can be viewed as a multi-message shuffling scheme \cite{cheu2019distributed, ghazi2020private}, and in particular, can be regarded as a generalization of the  scalar mean estimation scheme \cite{cheu2019distributed}  to $d$-dim mean estimation.
\end{remark}

\section{Experiments}\label{sec:experiments}

In this section, we empirically evaluate our mean estimation scheme (CSGM) from Section~\ref{sec:subsample_gauss}, examine its privacy-accuracy-communication trade-off, and compare it with other DP mechanisms (including the shuffling-based mechanism introduced in Section~\ref{sec:shuffing}).

\paragraph{Setup.} For a given dimension $d$, and number of samples $n$, we generate local vectors $X_i \in \mbb{R}^d$ as follows: let $X_i(j) \diid \frac{1}{\sqrt{d}}\lp2\cdot \msf{Ber}(0.8)-1\rp$ where $\msf{Ber}(0.8)$ is a Bernoulli random variable with bias $p=0.8$. This ensures $\lV X_i \rV_\infty \leq 1/\sqrt{d}$ and $\lV X_i \rV_2 \leq 1$, and in addition, the empirical mean $\mu\lp X^n \rp \eqDef \frac{1}{n}\sum_i X_i$ does not converge to $0$. Note that as our goal is to construct an unbiased estimator, we did not project our final estimator back to the $\ell_\infty$ or $\ell_2$ space as the projection step may introduce bias. Therefore, the $\ell_2$ estimation error can be greater than $1$. We account for the privacy budget with R\'enyi DP \cite{mironov2017renyi} and the privacy-amplification by subsampling lemma in \cite{zhu2019poission} and convert R\'enyi DP to $(\varepsilon, \delta)$-DP via \cite{canonne2020discrete}.

\paragraph{Privacy-accuracy-communication trade-off of CSGM.} In the first set of experiments (Figure~\ref{fig1}), we apply Algorithm~\ref{alg1} with different sampling rates $\gamma$, which leads to different communication budgets ($b=\gamma d$). Note that when $\gamma = 1$, the scheme reduces to the central Gaussian mechanism without compression. In Figure\ref{fig1}, we see that with a fixed communication budget, CSGM approximates the central (uncompressed) Gaussian mechanism in the high privacy regime (small $\varepsilon$) and starts deviating from it when $\varepsilon$ exceeds a certain value. In addition, that value of $\varepsilon$ depends only on sample size $n$ and the communication budget $b$ and not the dimension $d$ as predicted by our theory: recall that the compression error dominates the total error, and hence the performance starts to deviate from the (uncompressed) Gaussian mechanism  when $b = o(n\varepsilon^2)$, a condition that is independent of $d$. Observe, for example, that when $b = 50$ bits, the Gaussian mechanism starts outperforming CSGM at $\varepsilon \geq 0.5$ for both $d=500$ and $d=5000$. Hence, for  $\varepsilon \approx 0.5$ CSGM is able to provide 10X compression when $d=500$, but 100X compression when $d=5000$ without impacting MSE.

\begin{figure}[t]
  	\centering
  	\subfloat{{\includegraphics[width=0.49\linewidth]{./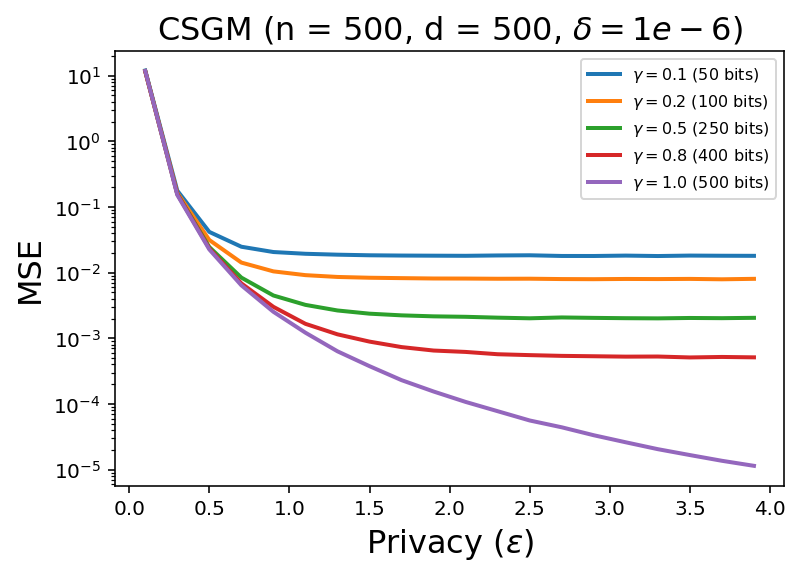} }}%
  	\subfloat{{\includegraphics[width=0.49\linewidth]{./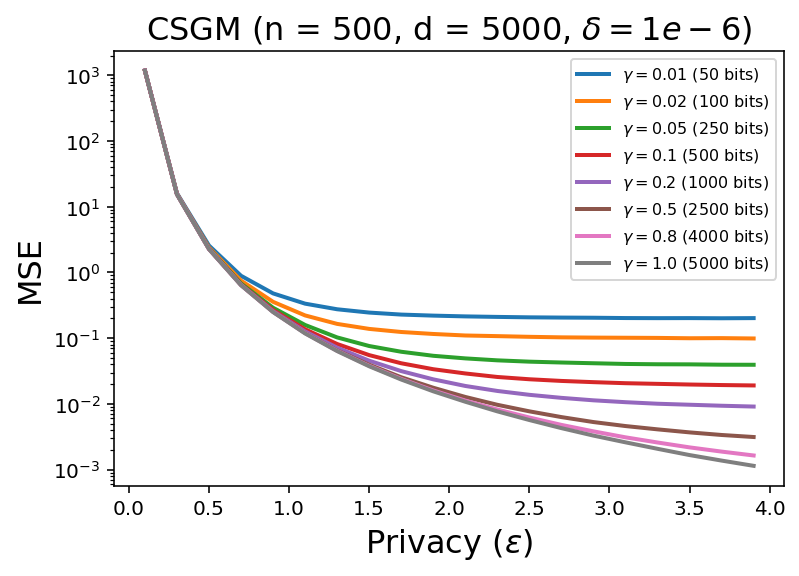} }}%
  
  	\caption{MSE of CSGM (Algorithm~\ref{alg1}) with a lower (left) and higher (right) dimension.}
  	\label{fig1}
\end{figure}

\paragraph{Comparison with local and shuffle DP.} Next, we compare the CSGM with local and shuffled DP for $d=10^3$ and $n=500$. For local DP, we consider the private-coin SQKR scheme introduced in Section~\ref{sec:shuffing} which uses $\lceil \log d \rceil = 10$ bits when $\varepsilon \leq 1$ and DJW \cite{duchi2013local} which is known to be order-optimal when $\varepsilon = O(1)$ (but is not communication-efficient). For shuffle-DP, we apply the amplification lemma in \cite{feldman2022hiding} to find the corresponding local $\varepsilon_0$ (see Section~\ref{sec:shuffing} for more details) and simulate both SQKR and DJW as the local randomizers
We note that all shuffle-DP mechanisms considered in this experiment are single-round (as opposed to the multi-round schemes in Section~\ref{sec:shuffing}), which is optimal in the high privacy regime (i.e., when $\varepsilon$ is small). 

The MSEs of all mechanisms are reported in Figure~\ref{fig3}. Our results suggest that for a fixed communication budget (say, $10$ bits), the practical performance of CSGM outperforms shuffled-DP mechanisms, including the shuffled SQKR and DJW. In addition, the amplification gain of shuffling diminishes fast as $\varepsilon$ increases. Indeed, when $\varepsilon \geq 0.8$, we observe no amplification gain compared to the pure local DP.

\begin{figure}[H]
\centering
{\includegraphics[width=0.65\linewidth]{./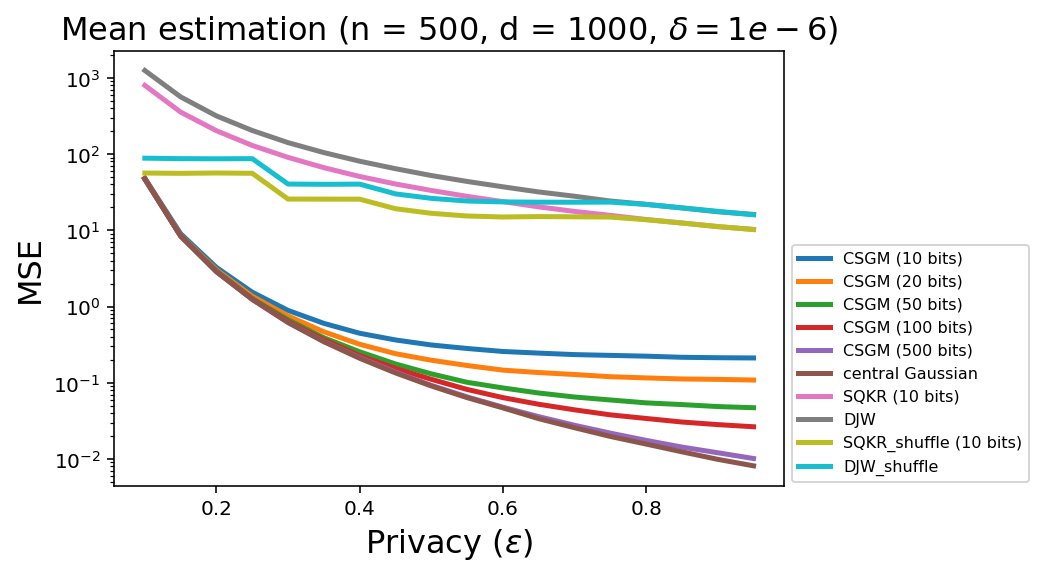} }
  	\caption{A comparison with local and shuffled DP.}
  	\label{fig3}
\end{figure}

\bibliographystyle{abbrv}
\bibliography{references}

\newpage
\appendix
\onecolumn
\section{Proof of Theorem~\ref{thm:SGM}}
It is trivial to see that the average communication cost is $d \cdot \gamma = b$ bits. To compute the $\ell_2^2$ estimation error, observe that
\begin{align*}
    &\E\lb \lV \hat{\mu}_{x^n} - \mu_{x^n} \rV^2_2 \rb\\
    &= \sum_{j = 1}^d \E\lb\lp \frac{1}{n\gamma}\sum_{i} x_i(j)\cdot Z_{i,j} + N(0, \sigma^2) - \frac{1}{n}\sum_{i} x_i(j)  \rp^2 \rb \\
    & = \sum_{j = 1}^d\frac{1}{n^2}\E\lb\lp \frac{1}{\gamma}\sum_{i} x_i(j)\cdot Z_{i,j} - \sum_{i} x_i(j)  \rp^2 \rb + d\sigma^2 \\
    & = \sum_{j = 1}^d\frac{1}{n^2}\E\lb\lp \frac{1}{\gamma}\sum_{i} x_i(j)\cdot Z_{i,j}  \rp^2 \rb - \frac{1}{n^2}\lp \sum_{i} x_i(j) \rp^2+ d\sigma^2 \\
    & =  \sum_{j = 1}^d \frac{1}{n^2}\E\lb \frac{1}{\gamma^2}\sum_{i} x^2_i(j)\cdot Z^2_{i,j}  + \frac{1}{\gamma^2}\sum_{i\neq i'} x_i(j)x_{i'}(j)Z_{i,j}Z_{i',j}\rb - \frac{1}{n^2}\lp \sum_{i} x_i(j) \rp^2 + d\sigma^2 \\
    & =  \sum_{j = 1}^d \frac{1}{n^2}\lp\frac{1}{\gamma}\sum_{i} x^2_i(j)  + \sum_{i \neq i'} x_i(j)x_{i'}(j)\rp - \frac{1}{n^2}\lp \sum_{i} x_i(j) \rp^2 + d\sigma^2 \\
    & =  \sum_{j = 1}^d \frac{1}{n^2} \lp\frac{1}{\gamma}-1\rp\lp \sum_{i} x^2_i(j) \rp + d\sigma^2 \\
    & \leq \frac{dc^2}{n\gamma} + d\sigma^2,
\end{align*}
which yields the inequality of \eqref{eq:SGM_accuracy}. Next, we analyze the privacy of Algorithm~\ref{alg1}. We first the following two lemmas for subsampling and the Gaussian mechanism:
\begin{lemma}[\cite{li2012sampling, zhu2019poission}]\label{lemma:classic_subsampling}
    If $\mcal{M}$ is $(\varepsilon, \delta)$-DP, then $\mcal{M}'$ that applies $\mcal{M}\circ \msf{PoissonSample}$ satisfies $(\varepsilon', \delta')$-DP with $\varepsilon' = \log\lp 1+\gamma\lp e^\varepsilon - 1 \rp\rp$ and $\delta' = \gamma \delta$. 
\end{lemma}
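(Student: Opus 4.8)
The plan is to use the standard coupling argument for privacy amplification by subsampling: express the output laws of $\mcal{M}\circ\msf{PoissonSample}$ on a neighboring pair as a common mixture, invoke the $(\varepsilon,\delta)$-DP guarantee of $\mcal{M}$ conditionally on the subsampled ``common'' part, and then compare the two mixtures term by term.

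First I would fix a neighboring pair and, without loss of generality, take it in add/remove form, $D$ and $D'=D\cup\{x^*\}$ (the natural relation for Poisson subsampling; the substitution case follows from an analogous coupling). Recall that $\msf{PoissonSample}$ with rate $\gamma$ retains each record of its input independently with probability $\gamma$. Couple the two sampling steps: let $T\sim\msf{PoissonSample}(D)$ and $\beta\sim\Ber(\gamma)$ be independent. Then $\msf{PoissonSample}(D)$ is distributed as $T$, while $\msf{PoissonSample}(D')$ is distributed as $T$ when $\beta=0$ and as $T\cup\{x^*\}$ when $\beta=1$. Writing $p_A(S)\eqDef\Pr[\mcal{M}(A)\in S]$ for the probability over $\mcal{M}$'s internal coins (which it is legitimate to condition on $T$, as $T$ is independent of those coins), one obtains, for every measurable $S$,
\[
\Pr[\mcal{M}'(D)\in S]=\E_T[\,p_T(S)\,],\qquad \Pr[\mcal{M}'(D')\in S]=\E_T[\,(1-\gamma)p_T(S)+\gamma\,p_{T\cup\{x^*\}}(S)\,].
\]

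For every realization of $T$ the sets $T$ and $T\cup\{x^*\}$ are neighboring, so $(\varepsilon,\delta)$-DP of $\mcal{M}$ yields the pointwise bounds $p_{T\cup\{x^*\}}(S)\le e^\varepsilon p_T(S)+\delta$ and $p_{T\cup\{x^*\}}(S)\ge e^{-\varepsilon}(p_T(S)-\delta)$. Substituting the first into the mixture for $D'$ gives $\Pr[\mcal{M}'(D')\in S]\le(1+\gamma(e^\varepsilon-1))\,\E_T[p_T(S)]+\gamma\delta$, which is exactly the claim with $\varepsilon'=\log(1+\gamma(e^\varepsilon-1))$ and $\delta'=\gamma\delta$. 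For the reverse inequality (needed since $D$ and $D'$ are asymmetric above), substituting the second bound gives $\Pr[\mcal{M}'(D')\in S]\ge(1-\gamma(1-e^{-\varepsilon}))\,\E_T[p_T(S)]-\gamma e^{-\varepsilon}\delta$, whence $\Pr[\mcal{M}'(D)\in S]\le\frac{1}{1-\gamma(1-e^{-\varepsilon})}\Pr[\mcal{M}'(D')\in S]+\frac{\gamma e^{-\varepsilon}}{1-\gamma(1-e^{-\varepsilon})}\delta$.

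The only step with genuine content is then checking that these reverse-direction constants are dominated by $(\varepsilon',\delta')$, i.e.\ $\frac{1}{1-\gamma(1-e^{-\varepsilon})}\le 1+\gamma(e^\varepsilon-1)$ and $\frac{e^{-\varepsilon}}{1-\gamma(1-e^{-\varepsilon})}\le 1$. Clearing denominators, the first reduces to $\gamma(1-\gamma)(e^{\varepsilon/2}-e^{-\varepsilon/2})^2\ge 0$ and the second to $\gamma\le 1$, so both hold precisely because $\gamma\in[0,1]$. I expect this elementary bookkeeping -- together with stating the neighboring relation carefully so the coupling in the second step is valid, and the minor measurability remark about conditioning on $T$ -- to be the ``hard'' part; the probabilistic heart of the argument is just the mixture decomposition. (Alternatively, both directions can be packaged simultaneously via the advanced joint convexity of the hockey-stick divergence $\mathrm{E}_{e^{\varepsilon'}}$, as in Balle--Barthe--Gaboardi.)
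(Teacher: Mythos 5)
The paper does not actually prove Lemma~\ref{lemma:classic_subsampling}: it is imported verbatim from \cite{li2012sampling, zhu2019poission} and only invoked inside the proof of Theorem~\ref{thm:SGM}. Your coupling/mixture argument is the standard proof underlying those references, and it checks out. The decomposition of $\msf{PoissonSample}(D')$ as a $\gamma$-mixture over the coupled sample $T$ (valid because the inclusion of $x^*$ is independent of the other inclusion indicators and of $\mcal{M}$'s coins), the two pointwise DP bounds $p_{T\cup\{x^*\}}(S)\le e^{\varepsilon}p_T(S)+\delta$ and $p_{T\cup\{x^*\}}(S)\ge e^{-\varepsilon}(p_T(S)-\delta)$, and the closing algebra are all correct: the first reverse-direction inequality is indeed equivalent to $\gamma(1-\gamma)(e^{\varepsilon/2}-e^{-\varepsilon/2})^2\ge 0$ (using $e^{\varepsilon}+e^{-\varepsilon}-2=(e^{\varepsilon/2}-e^{-\varepsilon/2})^2$), and the second to $\gamma\le 1$, so both directions hold with $\varepsilon'=\log(1+\gamma(e^{\varepsilon}-1))$ and $\delta'=\gamma\delta$. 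Relative to the paper, you have simply supplied the self-contained argument that the citation stands in for; the alternative route you mention (advanced joint convexity of the hockey-stick divergence) is the more general packaging in Balle--Barthe--Gaboardi and would spare you the asymmetric case analysis, but your elementary version is complete as written.

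One caveat: your ``without loss of generality'' reduction to the add/remove relation is not actually a WLOG. Under the substitution relation --- which is what the paper's Definition~\ref{def:dp} uses --- the analogous coupling writes the two output laws as $(1-\gamma)\mu_0+\gamma\mu_1$ and $(1-\gamma)\mu_0+\gamma\mu_1'$, where $\mu_1,\mu_1'$ are the conditional laws with the differing record included; obtaining an amplified level $e^{\varepsilon'}<e^{\varepsilon}$ then requires $\mu_1'$ to be close to a mixture of $\mu_0$ and $\mu_1$, i.e.\ the base mechanism must also satisfy DP with respect to removal, not merely substitution (a mechanism that releases the exact sample size alongside a randomized-response bit is substitution-DP but admits no such amplification with these constants). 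This does not affect the paper --- the lemma in its cited form is the add/remove statement, and the Gaussian mechanism on a bounded-sensitivity sum to which it is applied satisfies the required add/remove guarantee --- but you should either add that hypothesis when claiming the substitution case or drop the claim rather than call it analogous.
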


\begin{lemma}[\cite{balle2018improving}]\label{lemma:classic_gaussian}
    For any $\varepsilon, \delta \in (0, 1)$, the Gaussian output perturbation mechanism with $\sigma^2 \eqDef \frac{\Delta^22\log(1.25/\delta)}{\varepsilon^2}$ satisfies $(\varepsilon, \delta)$-DP, where $\Delta$ is the $\ell_2$ sensitivity of the target function.
\end{lemma}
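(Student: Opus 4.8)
The plan is to prove this by the classical privacy-loss random variable (PLRV) argument in the style of Dwork--Roth; the cited reference \cite{balle2018improving} establishes the exact, tight characterization of the Gaussian mechanism, from which the stated (looser) bound follows, but a short self-contained argument suffices here. Write the mechanism as $M(x) = f(x) + N(0,\sigma^2 I_k)$, where $f$ has $\ell_2$-sensitivity $\Delta$. The first step is to reduce to a one-dimensional problem. For neighbouring inputs $x, x'$ set $v = f(x) - f(x')$, so $\norm{v}_2 \le \Delta$; since a spherical Gaussian is rotationally invariant, after an orthogonal change of coordinates we may assume $v = (\norm{v}_2, 0, \dots, 0)$, and then the privacy loss $L(o) \eqDef \ln\frac{p_{M(x)}(o)}{p_{M(x')}(o)}$ depends on $o$ only through its first coordinate. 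As the explicit computation below shows, the law of $L$ has a right tail that is monotone increasing in $\norm{v}_2$, so it is enough to treat $\norm{v}_2 = \Delta$, i.e.\ the scalar pair $N(0,\sigma^2)$ versus $N(\Delta,\sigma^2)$; moreover the reflection $o \mapsto \Delta - o$ interchanges these two Gaussians, so bounding one right tail handles both orderings required by the two-sided DP inequality.

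The next step is the standard sufficient condition: $M$ is $(\varepsilon,\delta)$-DP provided $\Pr_{o\sim M(x)}[L(o) > \varepsilon] \le \delta$ for every neighbouring pair. This is immediate --- for any measurable $\mcal{S}$, partition it according to whether $L(o) \le \varepsilon$; on $\{L \le \varepsilon\}$ the density ratio is at most $e^\varepsilon$, contributing at most $e^\varepsilon \Pr[M(x') \in \mcal{S}]$, and the complement contributes at most $\Pr_{o\sim M(x)}[L(o) > \varepsilon] \le \delta$. For the reduced scalar Gaussians a direct calculation gives $L(o) = \frac{\Delta^2 - 2\Delta o}{2\sigma^2}$, so when $o \sim N(0,\sigma^2)$ we have $L \sim N\!\lp \frac{\Delta^2}{2\sigma^2},\, \frac{\Delta^2}{\sigma^2} \rp$, and therefore $\Pr[L > \varepsilon] = \Pr\!\lb N(0,1) > \frac{\varepsilon\sigma}{\Delta} - \frac{\Delta}{2\sigma} \rb$ (which also makes the monotonicity claim of the previous paragraph transparent, since the threshold is decreasing in $\norm{v}_2$).

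Finally --- and this is the only real computation --- I would substitute $\sigma = \Delta\sqrt{2\ln(1.25/\delta)}\,/\,\varepsilon$, so that the threshold becomes $t \eqDef \sqrt{2\ln(1.25/\delta)} - \frac{\varepsilon}{2\sqrt{2\ln(1.25/\delta)}}$, and apply the sub-Gaussian tail bound $\Pr[N(0,1) > t] \le \tfrac12 e^{-t^2/2}$ (valid for $t \ge 0$). Expanding, $\frac{t^2}{2} = \ln(1.25/\delta) - \frac{\varepsilon}{2} + \frac{\varepsilon^2}{8\ln(1.25/\delta)} \ge \ln(1.25/\delta) - \tfrac12$ using $\varepsilon < 1$, and since $\ln(1.25) + \ln 2 = \ln 2.5 > \tfrac12$ this is at least $\ln(1/(2\delta))$, hence $\tfrac12 e^{-t^2/2} \le \delta$, which is exactly what is needed. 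The main obstacle is precisely this last estimate: one must verify that the lower-order correction $-\frac{\Delta}{2\sigma}$ in the threshold does not spoil the bound and that the constant $1.25$ is the exact value that makes the numbers close, which forces the use of $\varepsilon < 1$ together with a small amount of care (the regime of $\delta$ close to $1$, where $t$ may fail to be nonnegative, is handled separately and the DP guarantee there is essentially vacuous).
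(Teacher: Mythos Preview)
The paper does not prove this lemma at all --- it is stated with a citation to \cite{balle2018improving} and used as a black box in the proof of Theorem~\ref{thm:SGM}. Your self-contained argument via the privacy-loss random variable is precisely the classical Dwork--Roth proof and is correct; there is nothing to compare it against in the paper itself.

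Two very minor remarks. First, in the expansion of $t^2/2$ the last term should be $\varepsilon^2/\bigl(16\ln(1.25/\delta)\bigr)$ rather than $\varepsilon^2/\bigl(8\ln(1.25/\delta)\bigr)$, but since you immediately drop this positive term the slip is harmless. Second, the corner case you flag (where $t<0$) is indeed easy: $t<0$ forces $\ln(1.25/\delta)<\varepsilon/4<1/4$, hence $\delta>1.25\,e^{-1/4}>0.97$, and for such $\delta$ one can simply bound $\Pr[N(0,1)>t]$ directly (it is only slightly above $1/2$ while $\delta$ is near $1$) or appeal to the exact Balle--Wang characterization, which is tighter than the sufficient condition $\Pr[L>\varepsilon]\le\delta$ you use.
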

Now, we use the above two lemmas to analyze the per-coordinate privacy leakage of Algorithm~\ref{alg1}. For simplicity, we analyze the sum of $x_i(j)$'s instead (and normalized it in the last step). Let $S_j(x^n) \eqDef \sum_{i=1}^n(x_i(j))$, then clearly the sensitivity of $S_j(x^n)$ is $c$, so Lemma~\ref{lemma:classic_gaussian} implies 
$S_j(x^n) + N(0, \sigma_1^2)$ satisfies $(\varepsilon_1, \delta_1)$-DP if we set
$\sigma^2_1 = \frac{2c^2\log(1.25/\delta_1)}{\varepsilon_1^2}$ (assuming $\varepsilon_1 < 1$). Next, if applying subsampling before computing the sum, i.e.,
$$S_j\circ\msf{PoissonSample}_\gamma(x^n) \eqDef \sum_{i=1}^n x_i(j)Z_{i,j},$$
where $Z_{i,j} \diid \msf{Bern}(1/\gamma)$ as defined in Algorithm~\ref{alg1}, then by Lemma~\ref{lemma:classic_subsampling}, 
$$S_j\circ\msf{PoissonSample}_\gamma(x^n)+N(0, \sigma_1^2) $$
satisfies $(\varepsilon_2, \delta_2)$-DP with 
$ \varepsilon_2 \eqDef \log\lp 1 + \gamma\lp e^{\varepsilon_1} -1 \rp \rp = C_1 \gamma\varepsilon_1$ (since we assume $\epsilon_1 < 1$) and $\delta_2 \eqDef \gamma \delta_1$. Equivalently, we have 
\begin{equation}\label{eq:eps2_from_eps1}
    \begin{cases}
    \varepsilon_1 = \tilde{C}_1\frac{1}{\gamma}\varepsilon_2\\
    \delta_1 = \frac{1}{\gamma} \delta_2.
\end{cases}
\end{equation}

Now, since we have established the per-coordinate privacy leakage, we apply the following composition theorem to account for the total privacy budgets.
\begin{theorem}\label{thm:composition}
    For any $\varepsilon > 0$, $\delta \in [0, 1]$ and $\tilde{\delta} \in (0, 1]$, the class of $(\varepsilon, \delta)$-DP mechanisms satisfies $(\tilde{\varepsilon}_{\tilde{\delta}}, d\delta + \tilde{\delta})$-DP under $d$-fold adaptive composition, for
    $$ \tilde{\varepsilon}_{\tilde{\delta}} = d\varepsilon\lp e^\varepsilon-1\rp+\varepsilon\sqrt{2d\log(1/\tilde{\delta})}. $$
\end{theorem}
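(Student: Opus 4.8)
\textbf{Proof proposal (advanced composition, Theorem~\ref{thm:composition}).}
The plan is the classical \emph{privacy-loss random variable plus concentration} route (this is essentially Theorem~III.3 of \cite{dwork2010boosting}, and the version with leading term $d\varepsilon(e^{\varepsilon}-1)$ is its sharpened form). Fix an arbitrary pair of neighboring global datasets $x^n, x'^n$, and write the adaptively composed mechanism as $M=(M_1,\dots,M_d)$, where $M_i$ may depend on the prefix $o_{<i}\eqDef(o_1,\dots,o_{i-1})$ of earlier outputs and, for each fixed $o_{<i}$, is $(\varepsilon,\delta)$-DP as a mechanism on the dataset. I will (a) reduce to the pure-DP case by absorbing the $\delta$'s into a $d\delta$ total-variation slack, (b) control the privacy-loss random variable of the resulting pointwise-bounded composition by a Chernoff/martingale argument, and (c) translate the tail bound back into an $(\tilde\varepsilon_{\tilde\delta},\tilde\delta)$ statement.

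For (a): by a standard approximate-max-divergence decomposition lemma (see \cite{dwork2010boosting}), whenever two output distributions $P,Q$ satisfy $P(S)\le e^{\varepsilon}Q(S)+\delta$ and $Q(S)\le e^{\varepsilon}P(S)+\delta$ for all measurable $S$, there exist $P',Q'$ with $\TV{P}{P'}\le\delta$, $\TV{Q}{Q'}\le\delta$ and pointwise bounded likelihood ratio $e^{-\varepsilon}\le \frac{dP'}{dQ'}\le e^{\varepsilon}$. Applying this to each $M_i$ at every realization of its prefix yields a modified adaptively composed process $\tilde M$ whose output law, under either dataset, is within total-variation distance $d\delta$ of that of $M$ (hybrid argument: replace one coordinate at a time, and by data processing each swap costs $\le\delta$, so the errors sum to $d\delta$). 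Hence it suffices to show $\tilde M$ is $(\tilde\varepsilon_{\tilde\delta},\tilde\delta)$-DP; adding the two $d\delta$ shifts then gives $(\tilde\varepsilon_{\tilde\delta},\,d\delta+\tilde\delta)$-DP for $M$.

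For (b) and (c): for $\tilde M$ (now with pointwise-bounded per-step likelihood ratios) introduce the privacy-loss random variable $L=\sum_{i=1}^d L_i$, where, conditioned on $o_{<i}$, $L_i=\log\frac{P_i(o_i)}{Q_i(o_i)}$ with $o_i\sim P_i$ and $P_i,Q_i$ the laws of the $i$-th output under $x^n$ and $x'^n$. Two facts drive everything: (i) $|L_i|\le\varepsilon$ pointwise; and (ii) $\E[L_i\mid o_{<i}]=\KLD{P_i}{Q_i}\le\varepsilon(e^{\varepsilon}-1)$. For (ii) I bound $\KLD{P}{Q}\le\KLD{P}{Q}+\KLD{Q}{P}=\sum_o(P(o)-Q(o))\log\tfrac{P(o)}{Q(o)}=\sum_o P(o)\,g\big(\tfrac{P(o)}{Q(o)}\big)$ with $g(r)\eqDef(1-1/r)\log r$, and use that $g\ge 0$, is minimized at $r=1$, and on $[e^{-\varepsilon},e^{\varepsilon}]$ attains maximum $\varepsilon(e^{\varepsilon}-1)$ at $r=e^{-\varepsilon}$; summing gives the claim. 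Then a Chernoff bound via the tower property: for $\lambda>0$, $\E[e^{\lambda L}]=\E\big[\prod_{i<d}e^{\lambda L_i}\cdot\E[e^{\lambda L_d}\mid o_{<d}]\big]$, and since $L_i$ lies in an interval of length $2\varepsilon$, Hoeffding's lemma gives $\E[e^{\lambda L_i}\mid o_{<i}]\le e^{\lambda\varepsilon(e^{\varepsilon}-1)+\lambda^2\varepsilon^2/2}$, hence $\E[e^{\lambda L}]\le e^{d\lambda\varepsilon(e^{\varepsilon}-1)+d\lambda^2\varepsilon^2/2}$. Optimizing $\lambda$ in $\Pr[L>d\varepsilon(e^{\varepsilon}-1)+s]\le e^{-\lambda s+d\lambda^2\varepsilon^2/2}$ gives $\Pr[L>\tilde\varepsilon_{\tilde\delta}]\le\tilde\delta$ for $s=\varepsilon\sqrt{2d\log(1/\tilde\delta)}$. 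Finally, for any event $S$, splitting on $\{L\le\tilde\varepsilon_{\tilde\delta}\}$ yields $\Pr[\tilde M(x^n)\in S]\le e^{\tilde\varepsilon_{\tilde\delta}}\Pr[\tilde M(x'^n)\in S]+\tilde\delta$, and swapping the two datasets (the DP hypothesis is symmetric, so $\tilde M$'s ratios are bounded both ways) gives the reverse inequality.

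\textbf{Main obstacle.} The delicate step is the reduction~(a) under adaptivity: the ``strip away the $\delta$-bad mass'' replacement of $M_i$ is applied at a prefix $o_{<i}$ that is itself already the \emph{modified} prefix, so one must argue inductively/hybrid-wise that the total-variation discrepancy accumulates only additively (to $d\delta$) rather than compounding, and that the resulting object $\tilde M$ is a legitimate adaptively composed mechanism whose per-step likelihood-ratio bound holds uniformly over prefixes — only then is $L_i$ conditionally bounded and the tower-property MGF bound valid. The concentration half of the argument is routine once this is in place; there the only care needed is tracking the Hoeffding constant so the deviation term comes out as exactly $\varepsilon\sqrt{2d\log(1/\tilde\delta)}$.
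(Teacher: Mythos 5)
The paper never proves this statement: it is imported verbatim as Theorem~III.3 of \cite{dwork2010boosting}, so there is no internal proof to compare against. Your reconstruction follows exactly the route of that cited source (privacy-loss random variable, per-step KL bound, martingale concentration), and the quantitative half is correct: the symmetrization $\KLD{P}{Q}\le \sum_o P(o)\,(1-Q(o)/P(o))\log\frac{P(o)}{Q(o)}$ with the ratio confined to $[e^{-\varepsilon},e^{\varepsilon}]$ does give the conditional mean bound $\varepsilon(e^{\varepsilon}-1)$, Hoeffding's lemma on an interval of length $2\varepsilon$ gives the factor $\lambda^2\varepsilon^2/2$, and optimizing the Chernoff bound at $s=\varepsilon\sqrt{2d\log(1/\tilde{\delta})}$ lands exactly on the stated $\tilde{\varepsilon}_{\tilde{\delta}}$ with failure probability $\tilde{\delta}$.

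The genuine gap is the one you flag yourself, step (a), and it is not only about adaptivity bookkeeping: as written, the symmetric TV-swap does not yield the stated $\delta$ budget. If $\tilde M$ denotes the modified process, your chain is $\Pr[M(x^n)\in S]\le \Pr[\tilde M(x^n)\in S]+d\delta\le e^{\tilde\varepsilon_{\tilde\delta}}\Pr[\tilde M(x'^n)\in S]+\tilde\delta+d\delta\le e^{\tilde\varepsilon_{\tilde\delta}}\Pr[M(x'^n)\in S]+(1+e^{\tilde\varepsilon_{\tilde\delta}})d\delta+\tilde\delta$, i.e.\ the additive term is $(1+e^{\tilde\varepsilon_{\tilde\delta}})d\delta+\tilde\delta$, not $d\delta+\tilde\delta$ (harmless for this paper's $O(\cdot)$ usage, but not the theorem as stated). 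The standard fix is a different decomposition: for each step and each prefix, write $P=(1-\delta)P_0+\delta P_1$ and $Q=(1-\delta)Q_0+\delta Q_1$ with $(P_0,Q_0)$ being $(\varepsilon,0)$-indistinguishable, and couple the $x$-run and $x'$-run through the \emph{same} per-step ``good'' coin flips; on the all-good event (probability at least $1-d\delta$ by a union bound) the per-step likelihood ratios are bounded, the concentration argument applies, and since $\Pr[M(x'^n)\in S]\ge \Pr[\text{good}]\cdot\Pr[\text{good-run}(x'^n)\in S]$ the $d\delta$ is only paid once, giving exactly $d\delta+\tilde\delta$. This coupling also resolves the adaptivity concern, since the decomposition is applied prefix-by-prefix inside a single well-defined simulated process rather than by an after-the-fact hybrid over marginals.
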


According Theorem~\ref{thm:composition}, Algorithm~\ref{alg1} satisfies $(\varepsilon, \delta)$-DP for
\begin{equation}\label{eq:k-fold_composition}
    \varepsilon = d\varepsilon_2(e^{\varepsilon_2}-1) + \varepsilon_2\sqrt{2d\log(1/\tilde{\delta})},
\end{equation} 
and $\delta = d \delta_2 + \tilde{\delta}$ (where $\tilde{\delta}$ is a free parameter that we can optimize).

Consequently, for a pre-specified (total) privacy budget $(\varepsilon, \delta)$, we set parameters as follows. Let $\tilde{\delta} = \frac{\delta}{2}$ and $\delta_1 = \frac{1}{\gamma}\delta_2 = \frac{1}{2d\gamma}\delta$. Let $\varepsilon_2 \leq 1$ so that $e^\varepsilon_2 - 1 \leq 2 \varepsilon_2$ holds. Then \eqref{eq:k-fold_composition} implies Algorithm~\ref{alg1} is 
$$ \varepsilon =  2d\varepsilon_2^2 + \varepsilon_2\sqrt{2d\log(1/\tilde{\delta})} \geq d\varepsilon_2(e^{\varepsilon_2}-1) + \varepsilon_2\sqrt{2d\log(1/\tilde{\delta})}.$$
Solving the above quadratic (in-)equality for $\varepsilon_2$, it yields that
$$ \varepsilon_2 = \min\lp 1, \frac{-\sqrt{2d\log(2/\delta)}+\sqrt{2d\log(2/\delta)+8\varepsilon d}}{4d} \rp = O\lp\min\lp 1,  \frac{\varepsilon}{\sqrt{d\lp \log(1/\delta)+\varepsilon\rp}}\rp\rp.$$
Consequently, we set $\varepsilon_1 = \frac{\tilde{C}_1}{\gamma}\varepsilon_2 = O\lp \min \lp 1, \frac{\varepsilon}{\gamma\sqrt{d(\log(1/\delta) + \varepsilon)}}\rp \rp$ (note that we require $\varepsilon_1 = O(1)$ so that \eqref{eq:eps2_from_eps1} holds).

Plug in $(\varepsilon_1, \delta_1)$ into $\sigma_1^2$, we have
$$ \sigma_1^2 \eqDef \frac{2c^2\log(1.25/\delta_1)}{\varepsilon_1^2} =  \Omega\lp \max\lp c^2\log(d/\delta), \frac{\gamma^2c^2d(\log(1/\delta) + \varepsilon)\log(d/\delta)}{\varepsilon^2} \rp \rp.$$

Finally, as we are interested in estimating the (subsampled) mean instead of the sum, we will normalize the private sum by 
$$ \hat{\mu}_j(x^n) = \frac{1}{n\gamma}\lp S_j\circ\msf{PoissonSample}_\gamma(x^n)+N(0, \sigma_1^2) \rp = \frac{1}{n\gamma}S_j\circ\msf{PoissonSample}_\gamma(x^n) + N(0, \sigma^2), $$
where 
$$\sigma^2 = O\lp \max\lp \frac{c^2\log(d/\delta)}{n^2\gamma^2}, \frac{c^2d(\log(1/\delta) + \varepsilon)\log(d/\delta)}{n^2\varepsilon^2} \rp \rp. $$
Plugging in $\sigma^2$ above and $\gamma = d/b$ yields the desired accuracy in Theorem~\ref{thm:SGM}.
\qedwhite

Since we will reuse the above result, we summarize it into the following lemma:
\begin{lemma}\label{lemma:subsample_gaussian_sigma}
    Let $f_i: \mbb{R}^{d\times m} \mapsto \mbb{R}^{D}$ for $i=1,...,B$ be $n$ functions with sensitivity bounded by $\Delta$ (where the number of inputs $m$ can be a random variable). Then 
    $$ \lp f_1\circ \msf{PoissonSample}_\gamma(x^n) + N(0, \sigma^2),..., f_B\circ \msf{PoissonSample}_\gamma(x^n) + N(0, \sigma^2)\rp$$
    satisfies $(\varepsilon, \delta)$-DP, if
    $$ \sigma^2 \geq O\lp \max\lp \Delta^2\log(B/\delta), \frac{\gamma^2\Delta^2B(\log(1/\delta) + \varepsilon)\log(B/\delta)}{\varepsilon^2} \rp \rp. $$
\end{lemma}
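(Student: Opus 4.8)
The statement is essentially the privacy half of the proof of Theorem~\ref{thm:SGM}, abstracted: the per-coordinate partial sum $S_j$ is replaced by the generic function $f_i$, the per-coordinate sensitivity $c$ by $\Delta$, and the number of coordinates $d$ by the number of functions $B$. So the plan is to re-run that same three-step argument. First I would apply the Gaussian mechanism: since each $f_i$ has $\ell_2$ sensitivity at most $\Delta$, Lemma~\ref{lemma:classic_gaussian} gives that $f_i(x^n)+N(0,\sigma_1^2)$ is $(\varepsilon_1,\delta_1)$-DP when $\sigma_1^2 = 2\Delta^2\log(1.25/\delta_1)/\varepsilon_1^2$. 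Next I would amplify by subsampling: composing with $\msf{PoissonSample}_\gamma$ and using Lemma~\ref{lemma:classic_subsampling}, each release $f_i\circ\msf{PoissonSample}_\gamma(x^n)+N(0,\sigma_1^2)$ becomes $(\varepsilon_2,\delta_2)$-DP with $\varepsilon_2 = \log(1+\gamma(e^{\varepsilon_1}-1)) \le 2\gamma\varepsilon_1$ (using $\varepsilon_1 = O(1)$) and $\delta_2 = \gamma\delta_1$. Finally I would compose the $B$ releases via Theorem~\ref{thm:composition}: the whole vector is $(\tilde\varepsilon_{\tilde\delta},\, B\delta_2+\tilde\delta)$-DP with $\tilde\varepsilon_{\tilde\delta} = B\varepsilon_2(e^{\varepsilon_2}-1)+\varepsilon_2\sqrt{2B\log(1/\tilde\delta)} \le 2B\varepsilon_2^2+\varepsilon_2\sqrt{2B\log(1/\tilde\delta)}$ for $\varepsilon_2\le 1$.

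To meet a prescribed budget $(\varepsilon,\delta)$ I would then reverse-engineer the free parameters exactly as in Theorem~\ref{thm:SGM}: set $\tilde\delta=\delta/2$, $\delta_1 = \delta_2/\gamma = \delta/(2B\gamma)$, and solve the quadratic $2B\varepsilon_2^2+\varepsilon_2\sqrt{2B\log(2/\delta)} = \varepsilon$ to get $\varepsilon_2 = O\lp\min\lp 1,\ \varepsilon/\sqrt{B(\log(1/\delta)+\varepsilon)}\rp\rp$, hence $\varepsilon_1 = O(\varepsilon_2/\gamma)$. Plugging $(\varepsilon_1,\delta_1)$ back into $\sigma_1^2$ and using $\log(1.25/\delta_1)=O(\log(B/\delta))$ (for $\gamma\le 1$) yields $\sigma^2 = \sigma_1^2 = O\lp\max\lp\Delta^2\log(B/\delta),\ \gamma^2\Delta^2 B(\log(1/\delta)+\varepsilon)\log(B/\delta)/\varepsilon^2\rp\rp$, which is the claimed threshold; the first term in the max is what appears when the $\min$ defining $\varepsilon_2$ saturates at $1$.

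The one place to be careful is the randomness of $\msf{PoissonSample}_\gamma$ shared across the $B$ outputs: if all $f_i$ act on the \emph{same} subsampled dataset $S$, then Theorem~\ref{thm:composition} does not literally apply to a sequence of $\varepsilon_2$-releases, so I would instead swap the order — first compose the $B$ (independent-noise) Gaussian mechanisms on $S$, obtaining the same $\tilde\varepsilon$, and then invoke Lemma~\ref{lemma:classic_subsampling} once on the composed mechanism. A short computation shows both orders give the same order-wise bound, so the conclusion is unchanged. All told there is no real conceptual obstacle here; the work is bookkeeping — faithfully propagating $(\varepsilon_1,\delta_1,\varepsilon_2,\delta_2,\tilde\delta)$ through the Gaussian $\to$ subsample $\to$ compose pipeline, handling the saturation of $\varepsilon_2$, and keeping the $\varepsilon_1=O(1)$ approximation for the subsampling bound honest.
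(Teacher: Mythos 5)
Your proposal is correct and follows essentially the same route as the paper: the paper proves this lemma implicitly as the privacy half of the proof of Theorem~\ref{thm:SGM} (Gaussian mechanism via Lemma~\ref{lemma:classic_gaussian}, amplification via Lemma~\ref{lemma:classic_subsampling}, then $B$-fold composition via Theorem~\ref{thm:composition} with the same reverse-engineering of $\varepsilon_1,\varepsilon_2,\delta_1,\delta_2,\tilde\delta$), and then simply records the resulting noise level as the lemma. Your caveat about shared subsampling randomness is reasonable but not needed in the paper's usage, since in Algorithms~\ref{alg1} and \ref{alg3} the Poisson subsampling is drawn independently for each coordinate/function, so the per-release $(\varepsilon_2,\delta_2)$ guarantees compose directly.
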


\section{Proof of Theorem~\ref{thm:SGMCS_l2}}
To prove Theorem~\ref{thm:SGMCS_l2}, it suffices to prove the following $\ell_\infty$ version:
\begin{theorem}\label{thm:SGMCS}
    Let $x_1,...,x_n \in \{-c, c\}^d$, $d' = \min\lp nb, \frac{n^2\varepsilon^2}{(\log(1/\delta)+\varepsilon)\log(d/\delta)} \rp$, and
    {\small \begin{align}\label{eq:sigma2}
        \sigma^2 = O\lp \frac{c^2\log(1/\delta)}{n^2\gamma^2}+\frac{c^2d'(\log(d'/\delta) + \varepsilon)\log(d'/\delta)}{n^2\varepsilon^2} \rp. 
    \end{align}}
    Then Algorithm~\ref{alg2} is $(\varepsilon, \delta)$-DP and yields an unbiased estimator on $\mu$. In addition, the (average) per-client communication cost is $\gamma d' = b$ bits, and the $\ell^2_2$ estimation error is at most 
    \begin{align}\label{eq:SGMSC_accuracy}
        O\lp c^2d^2 \log\lp\frac{d}{\delta}\rp\max\lp\frac{1}{nb}, \frac{(\log(1/\delta) + \varepsilon)}{n^2\varepsilon^2}\rp\rp.
    \end{align}
\end{theorem}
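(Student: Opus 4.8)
The plan is to reduce the entire statement to Theorem~\ref{thm:SGM} applied in dimension $d'$, followed by a short second-moment computation that accounts for the random coordinate selection and the $d/d'$ rescaling. I would first dispatch privacy, communication and unbiasedness. For privacy: the set $\mcal{J}$ is drawn independently of the data, so it suffices that for \emph{each fixed} $\mcal{J}$ the map $x^n\mapsto\hat{\mu}$ is $(\varepsilon,\delta)$-DP, since a mixture over a data-independent index of $(\varepsilon,\delta)$-DP mechanisms is again $(\varepsilon,\delta)$-DP (and zero-padding outside $\mcal{J}$ together with multiplication by $d/d'$ is post-processing). But for fixed $\mcal{J}$ the map is exactly CSGM (Algorithm~\ref{alg1}) run on the $d'$-dimensional vectors $x_i(\mcal{J})\in\{-c,c\}^{d'}$ with sampling rate $\gamma=b/d'$, so Theorem~\ref{thm:SGM} in dimension $d'$ gives $(\varepsilon,\delta)$-DP precisely when $\sigma^2$ is as in \eqref{eq:sigma2} (which is \eqref{eq:sigma} with $d\mapsto d'$). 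The communication cost is immediate: CSGM on a $d'$-dimensional $\pm c$ vector with rate $\gamma$ sends $\gamma d'=b$ bits on average. Unbiasedness follows from $\Pr[j\in\mcal{J}]=d'/d$: conditioned on $j\in\mcal{J}$, Theorem~\ref{thm:SGM} gives $\E[\hat{\mu}_{\mcal{J}}(j)\mid j\in\mcal{J}]=\mu(j)$, hence $\E[\tfrac{d}{d'}\hat{\mu}_j]=\tfrac{d}{d'}\Pr[j\in\mcal{J}]\,\mu(j)=\mu(j)$.

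For the $\ell_2^2$ error I would compute $\E[(\tfrac{d}{d'}\hat{\mu}_j-\mu(j))^2]$ coordinatewise. Writing $\hat{\mu}_j=\mathbbm 1\{j\in\mcal{J}\}\hat{\mu}_{\mcal{J}}(j)$ and using $\E[\mathbbm 1\{j\in\mcal{J}\}\hat{\mu}_{\mcal{J}}(j)]=\tfrac{d'}{d}\mu(j)$ and $\E[\mathbbm 1\{j\in\mcal{J}\}\hat{\mu}_{\mcal{J}}(j)^2]=\tfrac{d'}{d}(v_j+\mu(j)^2)$, where $v_j\leq c^2/(n\gamma)+\sigma^2$ is the conditional variance of CSGM at coordinate $j$ (read off from the computation in the proof of Theorem~\ref{thm:SGM}), one obtains $\E[(\tfrac{d}{d'}\hat{\mu}_j-\mu(j))^2]=\tfrac{d}{d'}v_j+(\tfrac{d}{d'}-1)\mu(j)^2$. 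Summing over $j$ and using $\sum_j v_j\leq dc^2/(n\gamma)+d\sigma^2$ and $\|\mu\|_2^2\leq dc^2$ gives, with $\gamma d'=b$,
\[
\E\|\hat{\mu}-\mu\|_2^2 \;\leq\; \underbrace{\frac{d^2c^2}{nb}}_{\text{compression}} \;+\; \underbrace{\frac{d^2\sigma^2}{d'}}_{\text{Gaussian noise}} \;+\; \underbrace{\Big(\frac{d}{d'}-1\Big)dc^2}_{\text{zeroed coordinates}}.
\]
The last step is to plug in $\sigma^2$ from \eqref{eq:sigma2} and $d'=\min\bigl(d,\,nb,\,n^2\varepsilon^2/((\log(1/\delta)+\varepsilon)\log(d/\delta))\bigr)$: the Gaussian-noise term splits into a piece $\lesssim d^2c^2\log(1/\delta)\,d'/(n^2b^2)\leq d^2c^2\log(1/\delta)/(nb)$ (using $d'\leq nb$) and a piece $d^2c^2(\log(d'/\delta)+\varepsilon)\log(d'/\delta)/(n^2\varepsilon^2)$; the zeroed-coordinate term is at most $d^2c^2/d'=d^2c^2\max\bigl(1/(nb),\,(\log(1/\delta)+\varepsilon)\log(d/\delta)/(n^2\varepsilon^2)\bigr)$ when $d'<d$ and vanishes when $d'=d$. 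Combining these and bounding $\log(d'/\delta)\leq\log(d/\delta)$ collapses everything to the claimed $O\bigl(c^2d^2\log(d/\delta)\max(1/(nb),(\log(1/\delta)+\varepsilon)/(n^2\varepsilon^2))\bigr)$.

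The mechanical calculations are routine; the real content is the choice of $d'$, and I expect verifying that this single choice controls all three terms simultaneously to be the crux. The tension is that $\sigma^2$ grows with $d'$ (more coordinates $\Rightarrow$ more composition), which would favor small $d'$, but the zeroed-coordinate term $(\tfrac{d}{d'}-1)\|\mu\|_2^2$ blows up for small $d'$, while $d^2\sigma^2/d'$ is essentially flat in $d'$ in its dominant piece. So the optimization balances the zeroed-coordinate term against the compression term (forcing $d'\asymp nb$) and against the unavoidable privatization error (forcing $d'\lesssim n^2\varepsilon^2/\mathrm{polylog}$), and one must check that when the last cap binds the zeroed-coordinate term is of the same order as the optimal Gaussian-mechanism error, i.e.\ that restricting a priori to $d'$ coordinates costs nothing. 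A secondary point is that $\mcal{J}$ is a uniform subset of \emph{fixed} size $d'$ rather than an i.i.d.\ inclusion, but since CSGM acts coordinatewise this enters only through the marginal $\Pr[j\in\mcal{J}]=d'/d$ and the computation goes through unchanged. Finally, the $\ell_2$ version (Theorem~\ref{thm:SGMCS_l2}) follows from this $\ell_\infty$ statement exactly as Corollary~\ref{cor:SGM_l2} follows from Theorem~\ref{thm:SGM}: Kashin's representation into $D=\Theta(d)$ coordinates with $\|\tilde{x}_i\|_\infty=O(C/\sqrt{d})$, randomized rounding, and the isometry of the tight frame.
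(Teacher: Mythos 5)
Your proposal is correct and follows essentially the same route as the paper: reduce to Theorem~\ref{thm:SGM} in dimension $d'$ for fixed $\mcal{J}$ (handling privacy, communication, and conditional unbiasedness there), decompose the error orthogonally into the rescaled CSGM error plus the $O(d^2c^2/d')$ cost of zeroing the unselected coordinates, and then check that the stated choice of $d'$ makes both pieces fit the claimed bound. Your coordinatewise variance computation is just the per-coordinate form of the paper's decomposition via $\mu_{\mcal{J}}$, and your explicit treatment of privacy and of the fixed-size subset (via the marginal $\Pr[j\in\mcal{J}]=d'/d$) only makes explicit what the paper leaves implicit, while the residual $\log(d'/\delta)$-versus-$\log(1/\delta)$ slack you absorb is the same logarithmic looseness present in the paper's own statements.
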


With a slight abuse of notation, we let $\mu_{\mcal{J}} \in \mbb{R}^d$ be such that
$$ \mu_{\mcal{J}}(j) = 
\begin{cases}
0, &\text{if} j \not\in \mcal{J} \\
\frac{d\mu_j}{d'}, &\text{else}.
\end{cases}$$
Note that $\mu_{\mcal{J}}$ is an unbiased estimate of $\mu$ if $\mcal{J}$ is selected uniformly at random. Then the $\ell_2^2$ error can be controlled by
\begin{align*}
    \E\lb \lV \mu - \hat{\mu} \rV^2_2 \rb
    & \overset{\text{(a)}}{=} \E\lb \lV \mu - \mu_\mcal{J} \rV^2_2 \rb + \E\lb \lV \mu_\mcal{J} - \hat{\mu} \rV^2_2 \rb \\
    & \overset{\text{(b)}}{\leq} \E\lb \lV \mu - \mu_\mcal{J} \rV^2_2 \rb + \frac{d^2}{d'^2} O\lp \max\lp \frac{d'^2c^2}{nb}, \frac{d'^3c^2\log(d/\delta)}{n^2b^2}, \frac{c^2d'^2(\log(1/\delta) + \varepsilon)\log(d/\delta)}{n^2\varepsilon^2}\rp\rp\\
    & = \E\lb \lV \mu - \mu_\mcal{J} \rV^2_2 \rb + O\lp \max\lp \frac{d^2c^2}{nb}, \frac{d^2d'c^2\log(d/\delta)}{n^2b^2}, \frac{c^2d^2(\log(1/\delta) + \varepsilon)\log(d/\delta)}{n^2\varepsilon^2}\rp\rp\\
    & \overset{\text{(c)}}{\leq} \frac{d^2c^2}{d'} + O\lp \max\lp \frac{d^2c^2}{nb}, \frac{d^2d'c^2\log(d/\delta)}{n^2b^2}, \frac{c^2d^2(\log(1/\delta) + \varepsilon)\log(d/\delta)}{n^2\varepsilon^2}\rp\rp,
\end{align*}
where (a) holds since $\mu_\mcal{J}$ is an unbiased estimate of $\mu$ and conditioned on $\mcal{J}$, $\hat{\mu}$ is an unbaised estimate of $\mu_\mcal{J}$; (b) follows from Theorem~\ref{thm:SGM}; (c) holds due to the following fact:
\begin{align*}
    \E\lb \lV \mu - \mu_\mcal{J} \rV^2_2 \rb \leq \sum_{j \in \mcal{J}} \mu_\mcal{J}(j)^2 + \sum_{j \in [d]}\mu_j^2 \leq \frac{d^2c^2}{d'} + dc^2 \leq \frac{2d^2c^2}{d'}.
\end{align*}
Therefore, by setting $d' = \min\lp nb, \frac{n^2\varepsilon^2}{(\log(1/\delta)+\varepsilon)\log(d/\delta)} \rp$ we ensure the first term in (c) is always smaller than the second term, and the second term can be simplified as follows:
\begin{align*}
    &O\lp c^2d^2 \max\lp \frac{1}{nb}, \frac{d'\log(d/\delta)}{n^2b^2}, \frac{(\log(1/\delta) + \varepsilon)\log(d/\delta)}{n^2\varepsilon^2}\rp\rp\\
    & \leq O\lp c^2d^2 \max\lp \frac{1}{nb}, \frac{nb\log(d/\delta)}{n^2b^2}, \frac{(\log(1/\delta) + \varepsilon)\log(d/\delta)}{n^2\varepsilon^2}\rp\rp\\
    & \leq O\lp c^2d^2 \log(d/\delta)\max\lp\frac{1}{nb}, \frac{(\log(1/\delta) + \varepsilon)}{n^2\varepsilon^2}\rp\rp.\\
\end{align*}

Finally, applying the same trick of Kashin's representation, we can transform the $\ell_\infty$ geometry to $\ell_2$ (similar to Proposition~\ref{cor:SGM_l2}), hence proving Theorem~\ref{thm:SGMCS_l2}.
\qedwhite

\section{Proof of Theorem~\ref{thm:srhr_error}}
Let $\pi \eqDef \frac{1}{n} \sum_i x_i$ and $\pi^{(\ell)}$ be defined in the same way as $x_i^{(\ell)}$ for $\ell \in [B]$. Then our goal is to bound $\lba \pi^{(\ell)}(j) - \hat{\pi}^{(\ell)}(j) \rba$, for all $\ell \in [2^{b-1}]$ and $j \in [B]$.

To this end, let $y^{(\ell)} \eqDef H_B \cdot \pi^{(\ell)}$ (so it holds that $\pi^{(\ell)} = \frac{1}{B}H_B\cdot y^{(\ell)}$). Then we have
\begin{align}\label{eq:rhr_l_inf_error}
    \E\lb\lba \pi^{(\ell)}(j) - \hat{\pi}^{(\ell)}(j) \rba\rb 
    &\overset{\text{(a)}}{\leq} \sqrt{\E\lb\lp\pi^{(\ell)}(j) - \hat{\pi}^{(\ell)}(j) \rp^2\rb } \nonumber\\
    & = \sqrt{\E\lb\lp \frac{1}{B}H_B\cdot \lp y^{(\ell)} - \hat{y}^{(\ell)}\rp(j) \rp^2\rb }.
\end{align}
Next, observe that due to the subsampling step, for all $\ell \in [2^{b-1}]$ and $j \in [B]$,
$$ \hat{y}^{(\ell)}(j) = \frac{B}{n}\sum_{i=1}^n \lan (H_B)_j, x_i^{(\ell)}\ran \cdot Z_{ij} + N(0, \sigma^2), $$
where recall that $Z_{ij} \diid \msf{Ber}(1/B)$. Therefore, $\hat{y}^{(\ell)}(j)$ is an unbiased estimator of $y^{(\ell)}(j)$. In addition, since we choose $Z_{ij}$ independently in Algorithm~\ref{alg3}, $\hat{y}^{(\ell)}(j)$'s are independent for different $j$'s, so we have
\begin{align}\label{eq:y_square_bdd}
    \E\lb \lp \hat{y}^{(\ell)}(j) - y^{(\ell)}(j)\rp^2 \rb 
    & = \msf{Var}\lp \hat{y}^{(\ell)}(j) \rp \nonumber\\
    & =  \sigma^2 + \frac{B^2}{n^2}\sum_{i=1}^n\lan (H_B)_j, x_i^{(\ell)}\ran^2\msf{Var}\lp Z_{ij} \rp \nonumber \\
    &\leq \sigma^2 + \frac{B}{n^2}\sum_{i=1}^n\lan (H_B)_j, x_i^{(\ell)}\ran^2 \nonumber \\
    &= \sigma^2 + \frac{B}{n^2}\underbrace{\sum_{i=1}^n\bbm{1}_{\lbp x_i \in \ell\text{-th chunk} \rbp}}_{\eqDef C_\ell},
\end{align}
and for all $j \neq j'$
\begin{equation}\label{eq:y_indep}
    \E\lb \lp \hat{y}^{(\ell)}(j) - y^{(\ell)}(j)\rp\cdot \lp \hat{y}^{(\ell)}(j') - y^{(\ell)}(j')\rp \rb = 0.
\end{equation}

Therefore, we continue bounding \eqref{eq:rhr_l_inf_error} as follows:
\begin{align*}
    \sqrt{\E\lb\lp \frac{1}{B}H_B\cdot \lp y^{(\ell)} - \hat{y}^{(\ell)}\rp(j) \rp^2\rb }
    &= \sqrt{\frac{1}{B^2}\E\lb\lan (H_B)_j, \lp \hat{y}^{(\ell)} - y^{(\ell)}\rp \ran^2\rb} \\
    &= \sqrt{\frac{1}{B^2}\E\lb \lp \sum_{k=1}^B (H_B)_{jk} \cdot \lp \hat{y}^{(\ell)}(k) - y^{(\ell)}(k)\rp \rp^2\rb} \\
    & \overset{\text{(a)}}{=}  \sqrt{\frac{1}{B^2}\E\lb \sum_{k=1}^B \lp \hat{y}^{(\ell)}(k) - y^{(\ell)}(k) \rp^2\rb} \\
    & \overset{\text{(b)}}{=}  \sqrt{\frac{C_\ell}{n^2} + \frac{\sigma^2}{B}} \\
    & \overset{\text{(c)}}{\leq} \sqrt{\frac{1}{n}+\frac{\sigma^2}{B}},
\end{align*}
where (a) holds since each entry of $H_B$ takes value in $\{-1, 1\}$ and by \eqref{eq:y_indep}, (b) holds due to \eqref{eq:y_square_bdd}, and (c) holds because $C_\ell \leq n$ for all $\ell$.

Finally, to bound the $\ell_2^2$ error, observe that the above analysis ensures that
$$ \E\lb\lp\pi^{(\ell)}(j) - \hat{\pi}^{(\ell)}(j) \rp^2\rb \leq \frac{C_{\ell(j)}}{n^2} + \frac{\sigma^2}{B}, $$
where $\ell(j) \in [2^{b-1}]$ is the index of the chuck containing $j$. Therefore, summing over $j \in [d]$, we must have
$$ \E\lb\lV\pi^{(\ell)} - \hat{\pi}^{(\ell)} \rV^2_2\rb \leq \sum_{j=1}^d\frac{C_{\ell(j)}}{n^2} + \frac{d\sigma^2}{B} = \frac{B}{n}+\frac{d\sigma^2}{B},$$
since 
$$ \sum_j C_{\ell(j)} = \sum_{\ell = 1}^{2^{b-1}}\sum_{j'\in \ell\text{-th chunk}} \sum_{i=1}^n \bbm{1}_{\{ i \in \ell-\text{th chunk} \}} = B\sum_{\ell = 1}^{2^{b-1}} \sum_{i=1}^n \bbm{1}_{\{ i \in \ell-\text{th chunk} \}} = B\cdot n. $$
\qedwhite

\section{Proof of Theorem~\ref{thm:srhr_dp}}
Let $f_j(x^n) \eqDef (\pi^{(1)}(j),..., \pi^{(2^{b-1})}(j))$, for $j = 1,...,B$. Then the $\ell_2$ sensitivity of $f_j$ is $\Delta = \frac{B}{n}$. Set the sampling rate $\gamma = \frac{1}{B}$ and the proof is complete by Lemma~\ref{lemma:subsample_gaussian_sigma}. \qedwhite

\section{Algorithm of Shuffled SQKR}\label{sec:alg_sqkr}
\begin{algorithm}[h]
\caption{Shuffled SQKR}\label{alg:shuffledsqkr}
\begin{algorithmic}
    \STATE {\bfseries Input:} users' data $\x_1, \dots, \x_n$, local-DP parameter $\eps_0$, communication parameters $b_0,T$
    \STATE {\bfseries Output:} mean estimator $\xest$
    \FOR{round $k\in [T]$}
        \FOR{user $i\in[n]$}
            \STATE Sample $s(i,1),\dots,s(i,b_0) \stackrel{\text{i.i.d.}}{\sim} \uni[d]$
            \STATE Sample $Z \sim \ber\left(\frac{e^{\eps_0}}{e^{\eps_0} + 2^{b_0} -1 }\right)$
            \IF{Z=1}
                \STATE Set $\y(i,1),\dots,\y(i,b_0) \gets \x_i(s(i,1)),\dots,\x_i(s(i,b_0))$
            \ELSE
                \STATE Sample $\y(i,1),\dots, \y(i,b_0) \stackrel{\text{i.i.d.}}{\sim}\uni\set{-c,c}$
            \ENDIF
            \STATE Send $\y(i,1),\dots, \y(i,b_0)$ and $s(i,1),\dots,s(i,b_0)$ to shuffler
        \ENDFOR
        \STATE Shuffler samples a permutation $\pi \sim \uni\set{f:[n]\to[n] \text{ bijective}}$
        \FOR{$j\in[b_0]$}
            \STATE Shuffler sends $\y(\pi(1),j),\dots, \y(\pi(n),j)$ and $s(\pi(1),j),\dots,s(\pi(n),j)$ to server
        \ENDFOR
        \STATE $\xestk \gets \frac{d}{nb_0}\frac{e^{\eps_0} + 2^{b_0} -1 }{e^{\eps_0}-1}\sum_{i=1}^n\sum_{j=1}^{b_0}\y(\pi(i),j)e_{s(\pi(i),j)}$
    \ENDFOR
    \STATE Return $\xest := \frac{1}{T}\sum_{k=1}^T\xestk$
\end{algorithmic}
\end{algorithm}

\section{Proof of Theorem~\ref{thm:shuffledsqkr}}
Each round $\x^n\mapsto \xestk$ of Algorithm~\ref{alg:shuffledsqkr} implements the private-coin SQKR scheme of \cite{chen2020breaking}, achieving the communication cost and error as stated in Lemma \ref{lemma:sqkr}.
\begin{lemma}[SQKR \cite{chen2020breaking}] \label{lemma:implementsqkr}
    For all $\eps_0>0, b_0>0$, the random mapping $\x_i \mapsto y(i,1),\dots,y(i,b_0), s(i,1),\dots,s(i,b_0)$ in Algorithm~\ref{alg:shuffledsqkr} is $(\eps_0,0)$-LDP and has output that can be communicated with $b_0\log(d)$ bits, and the $\xestk$ computed from $y(i,1),\dots,y(i,b_0),s(i,1),\dots,s(i,b_0)$ is an unbiased estimator satisfying
    \begin{equation}
        \max_{\xn}\exof{\norm{\xmean\left(\xn\right) - \xestk\left(\xn\right)}_2^2} = O\left(\frac{c^2d}{n\min\left(\eps_0^2, \eps_0, b_0\right)}\right).
    \end{equation}
\end{lemma}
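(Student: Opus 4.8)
This statement is the private-coin SQKR guarantee of \cite{chen2020breaking}, already recorded in the main text as Lemma~\ref{lemma:sqkr}, restated in the notation of Algorithm~\ref{alg:shuffledsqkr}; so the plan is to identify the relevant pieces of Algorithm~\ref{alg:shuffledsqkr} with SQKR and then transfer the three conclusions. Concretely, I would check that the map $\x_i\mapsto\bigl(y(i,1),\dots,y(i,b_0),s(i,1),\dots,s(i,b_0)\bigr)$ is exactly the SQKR local randomizer --- draw $b_0$ coordinate indices $s(i,\cdot)\diid\uni[d]$, read off the length-$b_0$ sign pattern of the ($\pm c$-valued) input on those indices, and apply one randomized-response step over the resulting $2^{b_0}$-ary alphabet, reporting the true pattern with probability $e^{\eps_0}/(e^{\eps_0}+2^{b_0}-1)$ and each of the other $2^{b_0}-1$ patterns with probability $1/(e^{\eps_0}+2^{b_0}-1)$ --- and that $\xestk$ is the corresponding debiased estimator. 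One point worth making explicit in the write-up: $\xestk$ depends on the shuffled reports only through the unordered multiset $\{(y(i,j),s(i,j))\}$, so the permutation $\pi$ does not enter here; this lemma is a statement about one client's randomizer and the server's aggregate alone.

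For a self-contained argument I would dispatch the three claims in order. \emph{Communication.} A report is $b_0$ index/sign pairs, $\log d+1$ bits each, i.e.\ $O(b_0\log d)$ bits. \emph{Privacy.} Since $s(i,\cdot)$ is drawn independently of $\x_i$, it suffices to bound the conditional likelihood ratio of $y(i,\cdot)$ given any fixed realization of $s(i,\cdot)$; conditionally this is a $2^{b_0}$-ary randomized response whose worst-case ratio between two inputs equals $e^{\eps_0}$, so the map is $(\eps_0,0)$-LDP. \emph{Accuracy.} Set $\alpha\eqDef(e^{\eps_0}+2^{b_0}-1)/(e^{\eps_0}-1)$ and $V_i\eqDef\sum_{j=1}^{b_0}y(i,j)e_{s(i,j)}$, so $\xestk=\tfrac{d\alpha}{nb_0}\sum_iV_i$. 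Conditioning on the subsampled index set, the debiased randomized-response output has conditional mean $\alpha^{-1}$ times the restriction of $\x_i$ to that set; averaging over the subsampling (each coordinate included with probability $b_0/d$) gives $\exof{V_i\mid\x_i}=\tfrac{b_0}{\alpha d}\x_i$, hence $\exof{\xestk}=\tfrac1n\sum_i\x_i=\xmean$, i.e.\ the estimator is unbiased. For the variance, the $V_i$ are independent across $i$ and each is a $\pm c$-valued vector supported on $b_0$ coordinates, so all cross terms vanish and $\exof{\norm{\xestk-\xmean}_2^2}$ reduces to $\tfrac{d^2\alpha^2}{n^2b_0^2}\sum_i\exof{\norm{V_i}_2^2}=\tfrac{d^2\alpha^2}{nb_0^2}\exof{\norm{V_1}_2^2}$; plugging in $\alpha$ and the bound on $\norm{V_1}_2^2$, and accounting for the Kashin normalization through which the $\ell_2$ radius $c$ of the statement enters, produces the claimed $O\!\bigl(c^2 d/(n\min(\eps_0^2,\eps_0,b_0))\bigr)$, the $\min$ reflecting the variance of a $2^{b_0}$-ary randomized response as a function of $\eps_0$.

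I expect the only genuinely delicate step to be this last one: getting the constants and, above all, the exact form of the $\min(\eps_0^2,\eps_0,b_0)$ right --- this requires tracking how the randomized-response rescaling factor $\alpha=(e^{\eps_0}+2^{b_0}-1)/(e^{\eps_0}-1)$ behaves across the regimes $\eps_0\ll1$, $\eps_0\gtrsim1$, and $b_0$ small, and being careful about the coordinate-scale-versus-ambient-dimension bookkeeping induced by the Kashin frame of size $D=\Theta(d)$ sitting behind the inputs. Since the randomizer and estimator are verbatim SQKR, the cleanest route in the final write-up is to import this directly from \cite{chen2020breaking} (Lemma~\ref{lemma:sqkr}) rather than re-derive it, keeping the computation above only as a sanity check.
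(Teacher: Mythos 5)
Your route coincides with the paper's: the appendix gives no independent derivation of this lemma, stating only that each round of Algorithm~\ref{alg:shuffledsqkr} implements the private-coin SQKR scheme of \cite{chen2020breaking} and importing the guarantee already recorded as Lemma~\ref{lemma:sqkr}, which is exactly the citation-based write-up you settle on. Your supplementary sketch (conditional $2^{b_0}$-ary randomized response given the indices $s(i,\cdot)$ for the $(\eps_0,0)$-LDP bound, debiasing by $(e^{\eps_0}+2^{b_0}-1)/(e^{\eps_0}-1)$ for unbiasedness, and independence across clients for the variance) is a correct sanity check of that identification, and it rightly isolates the only delicate points: the $\min(\eps_0^2,\eps_0,b_0)$ bookkeeping and the Kashin/coordinate-scale normalization through which $c$ enters the stated bound.
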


We now characterize the error performance of Algorithm~\ref{alg:shuffledsqkr} for general choices of parameters that satisfy privacy and communication constraints.
\begin{proposition}
    For all $\eps>0, b>0, n>0$,
    with any arbitrary choice of
    \begin{align}
        \delta_1 &\in \left(e^{-n}, 1\right]\\
        \delta_2 &\in \left(0, 1\right],
    \end{align}
    there exists a choice of parameters $\eps_0, b_0, T$ such that Algorithm~\ref{alg:shuffledsqkr} is $\left(\eps, T\delta_1 + \delta_2\right)$-DP, uses no more than $b$ bits of communication, and
    \begin{equation}
        \max_{\xn}\exof{\norm{\xmean - \xest}_2^2} = O\left(\max\left(\frac{c^2d
        \log(d)b_0}{nb}, \frac{c^2d\log(1/\delta_1)\left(\log(1/\delta_2)+\eps\right)}{n^2\eps^2}\right)\right).
    \end{equation}
\end{proposition}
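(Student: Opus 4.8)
The plan is to view Algorithm~\ref{alg:shuffledsqkr} as the $T$-fold adaptive composition of a single ``shuffled SQKR round,'' to analyze one round by chaining the local guarantee of SQKR (Lemma~\ref{lemma:implementsqkr}) with the shuffling amplification bound (Lemma~\ref{lemma:shufflingamplification}), to compose the $T$ rounds with the advanced composition theorem (Theorem~\ref{thm:composition}), and finally to choose $(\eps_0,b_0,T)$ so as to saturate whichever of the privacy budget or the communication budget binds first.

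For a fixed round $k$: by Lemma~\ref{lemma:implementsqkr} each user's per-round map is $(\eps_0,0)$-LDP with output describable in $b_0\log d$ bits, and the round estimator $\xestk$ is unbiased with $\max_{\xn}\E\lb \lV \xestk-\xmean \rV_2^2 \rb = O\lp c^2 d/(n\min(\eps_0^2,\eps_0,b_0))\rp$. The round-$k$ shuffler applies a uniformly random permutation to the $n$ per-user messages, so Lemma~\ref{lemma:shufflingamplification} applies --- provided its hypotheses $\eps_0\le1$ and $\eps_0\le\log\lp n/(16\log(2/\delta_1))\rp$ hold --- and gives that the server's view in round $k$ is $(\eps_1,\delta_1)$-DP with $\eps_1=O\lp\eps_0\sqrt{\log(1/\delta_1)/n}\rp$. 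Since $\delta_1>e^{-n}$ forces $\eps_1=O(1)$ (hence $e^{\eps_1}-1=O(\eps_1)$), Theorem~\ref{thm:composition} with free parameter $\tilde\delta=\delta_2$ shows the full $T$-round scheme is $\lp\tilde\eps,\,T\delta_1+\delta_2\rp$-DP with
\begin{equation*}
\tilde\eps \;=\; T\eps_1\lp e^{\eps_1}-1\rp+\eps_1\sqrt{2T\log(1/\delta_2)}\;=\;O\lp\frac{T\eps_0^2\log(1/\delta_1)}{n}+\eps_0\sqrt{\frac{T\log(1/\delta_1)\log(1/\delta_2)}{n}}\rp .
\end{equation*}

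For the parameter choice, take $\eps_0=\Theta(1)$ (so $\min(\eps_0^2,\eps_0,b_0)=\Theta(1)$ since $b_0\ge1$), pick $b_0$ with $b_0\log d\le b$, and let $T$ be the largest integer with $\tilde\eps\le\eps$ and $Tb_0\log d\le b$. Requiring the two terms in the displayed bound to each be at most $\eps/2$ and solving gives the privacy cap $T=O\lp n\eps^2/(\log(1/\delta_1)(\eps+\log(1/\delta_2)))\rp$, where the ``$\eps$'' summand comes from the $T\eps_1^2$ term and the ``$\log(1/\delta_2)$'' from the $\eps_1\sqrt{T\log(1/\delta_2)}$ term; hence $T=\Theta$ of the minimum of this cap and $b/(b_0\log d)$. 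Because $\xest=\frac1T\sum_k\xestk$ is an average of $T$ i.i.d.\ unbiased estimators (each round draws fresh independent randomness, so all cross terms vanish), $\max_{\xn}\E\lb \lV \xest-\xmean \rV_2^2 \rb=\frac1T\cdot O(c^2 d/n)=O(c^2 d/(nT))$. Substituting the two candidate values of $T$ gives $O\lp c^2 d\,b_0\log d/(nb)\rp$ when communication binds and $O\lp c^2 d\log(1/\delta_1)(\log(1/\delta_2)+\eps)/(n^2\eps^2)\rp$ when privacy binds; their maximum is the claimed bound, and the communication and privacy constraints hold by construction.

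\textbf{Main obstacle.} The real work is the parameter selection, and in particular two corner cases. (i) When $\eps$ is so small that even $T=1$ overshoots the privacy budget, one cannot keep $\eps_0=\Theta(1)$; instead take $T=1$ and shrink $\eps_0$ until $\eps_1$ is of the right order, exploiting that $\eps_1$ is \emph{linear} in $\eps_0$ and that $\min(\eps_0^2,\eps_0,b_0)=\eps_0^2$ still holds for $\eps_0\le1$ --- a short computation then recovers the same privacy-limited error term. (ii) The precondition $\eps_0\le\log\lp n/(16\log(2/\delta_1))\rp$ of Lemma~\ref{lemma:shufflingamplification} simultaneously forces $n$ not to be too small and bounds how close $\delta_1$ can be to $e^{-n}$; after translating $\delta=T\delta_1+\delta_2$ with $T\approx b/(b_0\log d)$, this is exactly the origin of the restriction $\delta\ge\delta_{\mathrm{min}}=O(be^{-n}/\log d)$ appearing in Theorem~\ref{thm:shuffledsqkr}. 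Everything else is bookkeeping: once the constants in $\eps_0,b_0,T$ are tuned, the result follows by assembling the three cited statements.
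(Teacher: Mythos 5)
Your proposal is correct and follows essentially the same route as the paper's proof: chain the per-round SQKR guarantee (Lemma~\ref{lemma:implementsqkr}) with amplification by shuffling (Lemma~\ref{lemma:shufflingamplification}), compose the $T$ rounds via Theorem~\ref{thm:composition}, and tune $(\eps_0,b_0,T)$, with the per-round unbiasedness/independence giving the $1/T$ variance reduction. The only difference is bookkeeping: the paper fixes $T=\floor{b/((\log_2(d)+1)b_0)}$ from the communication budget alone and sets $\eps_0 = O\lp\min\lp 1, \eps\sqrt{n}/\sqrt{T\log(1/\delta_1)(\log(1/\delta_2)+\eps)}\rp\rp$, so the error $O\lp c^2 d/(Tn\eps_0^2)\rp$ splits automatically into the two claimed terms and your corner case (i) — taking $T=1$ and shrinking $\eps_0$ when privacy binds — never needs separate treatment.
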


\begin{proof}
    For arbitrary choice of
    \begin{equation}\label{eq:bcond}
        b_0 < \log\left(\frac{n}{16\log(2)}\right),
    \end{equation}
    it suffices to choose
    \begin{align}
        T &= \floor{\frac{b}{(\log_2(d)+1) b_0}}\\
        \eps_0 &= O\left(\min\left(1, \frac{\eps\sqrt{n}}{\sqrt{T\log(1/\delta_1)\left(\log(1/\delta_2) + \eps\right)}}\right)\right).
    \end{align}

Since it takes $b_0$ bits to send $y(i,1),\dots,y(i,b_0)$ and $\log_2(d)$ bits to send each of $s(i,1),\dots,s(i,b_0)$, and this is done $T$ times, Algorithm~\ref{alg:shuffledsqkr} using less than $b$ bits is immediate from the choice of $T$.

Applying Lemma~\ref{lemma:implementsqkr}, by construction the mapping from each $x_i$ to $y(i,1),\dots,y(i,b_0)$ is $(\eps_0, 0)$-LDP.
By assumption
\begin{align}
    \delta_1 > e^{-n/16e} > e^{-n},
\end{align}
the inequality
\begin{align}
    1 < \log\left(\frac{n}{16\log(2/\delta_1)}\right)
\end{align}
is satisfied.
Then the choice of
\begin{align}
    \eps_0 \leq 1
\end{align}
also satisfies $\eps_0 \leq \log\left(\frac{n}{16\log(2/\delta)}\right)$, so by Lemma~\ref{lemma:shufflingamplification} the mapping $\xn\mapsto \xestk$ is $\left(\eps_1, \delta_1 \right)$-DP.
where 
\begin{equation}
    \eps_1 = O\left(\frac{\eps_0\sqrt{\log(1/\delta_1)}}{\sqrt{n}}\right).
\end{equation}
 
Since the output of Algorithm~\ref{alg:shuffledsqkr} is a function of $\left(\xest^{(1)}, \dots, \xest^{(T)}\right)$, by \ref{thm:composition} it suffices to have
\begin{align}
    \eps_1 = O\left(\min\left(1, \frac{\eps}{\sqrt{T(\log(1/\delta_2) + \eps)}}\right)\right)
\end{align}
for Algorithm~\ref{alg:shuffledsqkr} to be $(\eps, T\delta_1 + \delta_2)$-DP.
The first inequality follows from the assumption of $\delta_1 > e^{-n}$ and choice of $\eps_0= O(1)$, and the second from choice of
\begin{equation}
    \eps_0 = O\left( \frac{\eps\sqrt{n}}{\sqrt{T\log(1/\delta_1)\left(\log(1/\delta_2) + \eps\right)}}\right).
\end{equation}

Since $\eps_0\leq 1\leq b$, we have $\min(\eps_0^2, \eps_0, b) = \eps_0^2$. Applying Lemma~\ref{lemma:implementsqkr},
\begin{align}
    \max_{\xn}\exof{\norm{\xmean - \xest}_2^2} &= \frac{1}{T}\max_{\xn}\exof{\norm{\xmean - \xest^{(1)}}_2^2}\\
    &= O\left(\frac{d}{Tn\eps_0^2}\right)\\
    &= O\left(\max\left(\frac{d}{Tn}, \frac{d\log(1/\delta_1)\left(\log(1/\delta_2) + \eps\right)}{n^2\eps^2}\right)\right).
\end{align}
Substituting the choice of $T$ gives the desired result.
\end{proof}

To show Theorem~\ref{thm:shuffledsqkr}, it suffices to choose
\begin{align}
    b_0 &= 1\\
    \delta_1 &= \frac{\delta}{2T}\\
    \delta_2 &= \frac{\delta}{2},
\end{align}
which requires $n > 16 e\log(2)\approx30.14$ due to (\ref{eq:bcond}), and apply the previous proposition.

\section{R\'enyi-DP for Shuffled SQKR}

We can use the following result for R\'enyi-DP (RDP) guarantees for Algorithm~\ref{alg:shuffledsqkr}.
\begin{lemma}[\cite{feldman2023stronger} Corollary 4.3]
\label{lemma:shufflingrenyi}
\newcommand{\m}{\mathcal{M}}
    Let $\m_i$ be an independent $(\eps_0, 0)$-LDP mechanism for each $i\in[n]$ with $\eps_0\leq 1$ and $\pi$ be a random permutation of $[n]$. Then for any $\alpha < \frac{n}{16\eps_0\exp(\eps_0)}$, the mechanism
    \begin{equation*}
        \mathcal{S}: \left(\x_1, \dots, \x_n\right) \mapsto \left(\m_1\left(\x_{\pi(1)}\right), \dots, \m_n\left(\x_{\pi(n)}\right)\right)
    \end{equation*}
    is $(\eps(\alpha),\delta)$-RDP where
    \begin{equation}
        \eps(\alpha) = O\left(\alpha\left(1-e^{-\eps_0}\right)^2 \frac{e^{\eps_0}}{n}\right).
    \end{equation}
\end{lemma}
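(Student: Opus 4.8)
This statement is essentially a restatement of Corollary~4.3 of \cite{feldman2023stronger}, specialized to our notation, so the plan is to recall that result, check that its hypotheses match ours, and absorb the explicit constants it provides into the $O(\cdot)$ form written above. First I would recall the proof strategy of \cite{feldman2023stronger}: to bound the order-$\alpha$ R\'enyi divergence between the distributions of $\mcal{S}$ on two datasets differing in one coordinate, one uses the structural decomposition of a pure $\eps_0$-LDP randomizer --- its output on input $\x$ is a mixture of an $\x$-independent ``blanket'' component and an $\x$-dependent component, with the blanket mass of order $e^{-\eps_0}$. After shuffling, the users whose message was drawn from the blanket are information-theoretically indistinguishable; the number of such ``clones'' among the $n-1$ unchanged users concentrates like a $\Binom(n-1, \Theta(1-e^{-\eps_0}))$ variable, and conditioned on $C$ clones the residual privacy loss of the changed user is that of a message hidden among $\Theta(C)$ uniform copies.

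Second, I would reproduce the R\'enyi-divergence estimate from \cite{feldman2023stronger}: controlling the $\alpha$-th moment of the resulting likelihood ratio requires $\alpha$ to be small relative to the typical number of clones, which is why the hypothesis $\alpha < n/(16\eps_0 e^{\eps_0})$ appears; in that regime the moment bound yields a R\'enyi divergence of at most $c\,\alpha\,(1-e^{-\eps_0})^2 e^{\eps_0}/n$ (plus the small residual $\delta$ term carried through) for an absolute constant $c$. Matching this against the definition of $(\eps(\alpha),\delta)$-RDP gives $\eps(\alpha) = O\lp \alpha (1-e^{-\eps_0})^2 e^{\eps_0}/n \rp$, which is the claim. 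As a sanity check, for small $\eps_0$ this is $O(\alpha \eps_0^2/n)$, consistent with the $(\eps,\delta)$-DP amplification of Lemma~\ref{lemma:shufflingamplification} (per-round central $\eps \approx \eps_0/\sqrt{n}$, and Gaussian-type RDP scales like $\alpha \eps^2$).

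The only genuine work beyond invoking the reference is bookkeeping: verifying that Corollary~4.3 is proved for an arbitrary (not necessarily identical) family of $(\eps_0,0)$-LDP randomizers --- it is, since the argument uses only the per-user LDP guarantee and mutual independence --- and reconciling the neighboring relation used there with our replace-one relation from Definition~\ref{def:dp}, which at worst costs a constant factor absorbed into the $O(\cdot)$ and a constant-factor tightening of the admissible range of $\alpha$. The main ``obstacle,'' such as it is, is therefore purely a matter of transcribing the regime of validity and the absolute constants from \cite{feldman2023stronger} correctly; there is no new mathematical content to establish here.
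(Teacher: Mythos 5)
Your proposal is correct and matches the paper's treatment: the paper does not prove this lemma itself but simply imports it as Corollary~4.3 of \cite{feldman2023stronger}, exactly as you do, and your sketch of the clones/blanket argument and the bookkeeping on hypotheses and constants is a faithful account of that reference. Nothing further is needed.
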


Applying Lemma~\ref{lemma:implementsqkr}, by construction the mapping from each $x_i$ to $y(i,1),\dots,y(i,b_0)$ is $(\eps_0, 0)$-LDP. By Lemma~\ref{lemma:shufflingrenyi}, the mapping $\xn\mapsto \xestk$ is $\left(\eps_1, \alpha \right)$-RDP where
    \begin{equation}
        \eps_1 = O\left(\alpha\left(1-e^{-\eps_0}\right)^2 \frac{e^{\eps_0}}{n}\right)
    \end{equation}
By composition, Algorithm~\ref{alg:shuffledsqkr} is $\left(T\eps_1, \alpha\right)$-RDP.

\end{document}